\newtheorem{theorem}{Theorem}
\newtheorem{lemma}{Lemma} 
\definecolor{darkblue}{RGB}{0,0,150}
\title{BI-DCGAN: A Theoretically Grounded Bayesian Framework for Efficient and Diverse GANs}
\author {
    % Authors
    Mahsa Valizadeh,  %\textsuperscript{\rm *}, 
    Rui Tuo, %\textsuperscript{\rm 1},
    James Caverlee
}
\begin{document}

\maketitle

\begin{abstract}
Generative Adversarial Networks (GANs) are proficient at generating synthetic data but continue to suffer from mode collapse, where the generator produces a narrow range of outputs that fool the discriminator but fail to capture the full data distribution. This limitation is particularly problematic, as generative models are increasingly deployed in real-world applications that demand both diversity and uncertainty awareness. In response, we introduce BI-DCGAN, a Bayesian extension of DCGAN that incorporates model uncertainty into the generative process while maintaining computational efficiency. BI-DCGAN integrates Bayes by Backprop to learn a distribution over network weights and employs mean-field variational inference to efficiently approximate the posterior distribution during GAN training. We establishes the first theoretical proof, based on covariance matrix analysis, that Bayesian modeling enhances sample diversity in GANs. We validate this theoretical result through extensive experiments on standard generative benchmarks, demonstrating that BI-DCGAN produces more diverse and robust outputs than conventional DCGANs, while maintaining training efficiency. These findings position BI-DCGAN as a scalable and timely solution for applications where both diversity and uncertainty are critical, and where modern alternatives like diffusion models remain too resource-intensive.
\end{abstract}

\section{Introduction}

In an era where generative models are increasingly deployed in sensitive and high-stakes applications, the ability to produce diverse, uncertainty-aware outputs is more important than ever. Standard Generative Adversarial Networks (GANs) \cite{goodfellow2014generative} and even their widely used variant, Deep Convolutional GANs (DCGANs) \cite{radford2015unsupervised} remain prone to mode collapse and fail to capture model uncertainty. This significantly limits their effectiveness in real-world applications where robustness, diversity, and interpretability are essential.

Recently, diffusion models \cite{ho2020denoising, yang2023diffusion} and transformer-based generators \cite{esser2021taming} have emerged as state-of-the-art methods in generative modeling, and they achieved impressive results in the synthesizing of high-quality samples. However, these models often require substantial computational resources, extensive training time, and are generally resource-intensive~\cite{dhariwal2021diffusion, ulhaq2022efficient}. In contrast, GANs, particularly its prominant variants like DCGAN, remain attractive in domains where efficiency, representation learning, and semantic editing are important \cite{dhariwal2021diffusion}. However, GANs still face persistent challenges, including mode collapse and the absence of a principled framework for uncertainty modeling.

The core of GANs is an adversarial process where the generator $(G)$ tries to produce samples resembling the real data distribution $p_{data}$, while the discriminator $(D)$ attempts to distinguish between real and generated samples. This adversarial training is formulated as a min-max optimization problem, Equation~\ref{ganloss}.
\begin{align}
    \label{ganloss}
    \min_G \max_D V(G,D) &= \mathbb{E}_{x \sim p_{data}(x)}[\log D(x)] \nonumber\\
    &\quad+\mathbb{E}_{z \sim p_{z}(z)}[\log(1 - D(G(z)))]
\end{align}

DCGAN~\cite{radford2015unsupervised} enhances GANs by using convolutional layers instead of fully connected ones, leading to improved training stability and performance \cite{radford2015unsupervised, farajzadeh2022generative}. GANs and DCGANs have made significant contributions to AI and machine learning \cite{saxena2021generative, jabbar2021survey, motwani2020novel}, and have shown broad utility in domains such as medical image synthesis \cite{kazeminia2020gans, bushra2020survey, frid2018gan, frid2018synthetic,chuquicusma2018fool, kitchen2017deep}, forensic sketch generation \cite{bushra2021crime}, anomaly detection \cite{schlegl2017unsupervised}, and representation learning \cite{lee2018diverse, mathieu2016disentangling}. Yet, these generative networks face key training challenges \cite{wiatrak2019stabilizing, lee2020regularization, cao2018recent, Arjovsky2017TowardsPM}, particularly mode collapse, where the generator produces limited outputs that fail to reflect the full diversity of the data distribution \cite{durgadevi2021generative, thanh2020catastrophic}. Motivated by the mode collapse issue and increasing diversity of generated samples, our framework makes three key contributions: 
\begin{itemize}
    \item We integrate a weight distribution within the network using the Bayes by Backprop method and employ mean-field variational inference to approximate the posterior distributions of the weights. We refer to this network as BI-DCGAN (Bayesian-Infused Deep Convolutional Generative Adversarial Network).
    %\item
    \item We provide a rigorous mathematical proof to show the enhanced diversity of BI-DCGAN, %Bayesian DCGAN
    based on the analysis of covariance matrices derived from generated samples. To the best of our knowledge, this is the first study to mathematically analyze and confirm the enhanced diversity of Bayesian-based DCGAN. %Bayesian DCGAN.
    %\item Proposing a mathematical approach to quantify and compare the diversity of samples generated by Bayesian DCGAN with those from conventional DCGAN.
    \item We validate the proposed theoretical result through extensive empirical experiments across multiple benchmark datasets.
\end{itemize}

\section{Related Work}
\label{Related Work}

A persistent challenge in training GANs is mode collapse, where the generator produces limited modes of data despite input diversity. Numerous approaches have been proposed to mitigate this through alternative objective functions and discriminator structures, such as f-GAN \cite{nowozin2016f}, least-squares GAN (LSGAN) \cite{mao2017least}, Wasserstein GAN (WGAN) \cite{arjovsky2017wasserstein}, and WGAN-GP \cite{gulrajani2017improved}. Among these, the WGAN framework employs the Wasserstein distance (Earth Mover’s Distance), offering smoother gradients and improved stability. By enforcing a 1-Lipschitz constraint on the critic, WGAN encourages the generator to produce more diverse samples. Still, even these formulations rely on point estimates of network parameters and thus overlook the epistemic uncertainty that may contribute to mode collapse during adversarial training.

To address uncertainty, several works have explored Bayesian formulations of GANs, which replace deterministic network weights with probabilistic distributions to capture model uncertainty and potentially increase sample diversity. \citet{saatci2017bayesian} introduced Bayesian GANs using stochastic gradient Hamiltonian Monte Carlo (SGHMC) to approximate posterior distributions over the parameters of both generator and discriminator. While this framework demonstrated that modeling weight uncertainty can lead to richer data representations, reliance on SGHMC introduces computational and hyperparameter tuning challenges. Similarly, \citet{chien2019variational} proposed a Bayesian framework for GAN by combining insights from GANs, Variational Autoencoders (VAEs), and Bayesian neural networks. Their method applies variational Bayesian inference to learn parameter posteriors, improving sample realism. However, their focus was primarily on performance in supervised and semi-supervised tasks rather than explicitly quantifying or proving increases in sample diversity.

In the present study, we propose the first mathematical proof that a Bayesian formulation of DCGAN, treating network weights as probability distributions, enhances sample diversity, directly addressing mode collapse. We introduce BI-DCGAN (Bayesian-Infused Deep Convolutional Generative Adversarial Network), which employs Bayes by Backprop \cite{blundell2015weight} with mean-field variational inference (MFVI) for scalable and efficient uncertainty modeling. Unlike computationally intensive samplers such as SGHMC \cite{saatci2017bayesian}, which require careful tuning of hyperparameters like momentum and step size, factors that can hinder convergence and stability, particularly in GAN settings, MFVI assumes a factorized posterior, simplifying optimization and reducing the need for such complex tuning. While \citet{blundell2015weight} focused on Bayesian inference in supervised tasks, we extend this Bayesian approach to the unsupervised generative setting, specifically targeting mode collapse. Our contributions are twofold: (1) we provide a formal mathematical proof that Bayesian treatment of weights in DCGAN increases sample diversity, effectively addressing mode collapse; and (2) we empirically validate this on standard benchmarks, showing consistent gains in diversity over conventional DCGAN baselines.

\section{BI-DCGAN Architecture}
\label{BayesDCGAN}

In this section, we present the BI-DCGAN architecture, a novel integration of Bayesian Neural Networks (BNNs) into the DCGAN framework, specifically designed to address mode collapse by enhancing sample diversity through principled uncertainty modeling. Our architecture departs from conventional DCGANs by replacing deterministic convolutional layers with Bayesian 2D convolutional and transpose convolutional layers, enabling a distribution over network weights to be learned during training. This integration builds on the Bayes by Backprop approach introduced by \citet{blundell2015weight}, adapted and extended here to suit the adversarial generative modeling context.

In this formulation, each weight in the network is treated as a random variable with a learnable posterior distribution. Specifically, we model the variational posterior $q(w|\theta)$ as a diagonal Gaussian, parameterized by mean $\mu$ and scale $\sigma = \log(1 + \exp(\rho))$, and apply the reparameterization trick:
\begin{equation}
    \centering
    \label{eq w}
    W = \mu + \log(1+exp(\rho))\odot \epsilon
\end{equation}
where $\epsilon\sim \mathcal{N}(0,I)$. The prior over weights is a scale mixture of two Gaussian densities with zero mean and distinct variances: 
\begin{equation}
    \label{prior}
    \centering
    P(w)=\prod_{j} { \pi \mathcal{N}(w_{j}\mid 0,\,\sigma^2_{1})+(1-\pi)\mathcal{N}(w_{j}\mid0,\,\sigma^2_{2}) }
\end{equation}
where \( w_{j} \) represents the jth component of the weight vector \( W \), and \( \pi \) signifies the mixture weight, controlling the influence of each Gaussian component in the prior.

The training objective %function of Bayesian Neural Networks is to 
minimizes KL divergence between the approximate posterior and the true Bayesian posterior, which decomposes as:

\begin{equation}
    \begin{split}
    \label{obj}
    \text{KL}(q(w|\theta)||p(w|D)) &= -ELBO \\
     %&=  \text{KL}(q(w|\theta)||p(w)) - \mathbb{E}_{w \sim q(w|\theta)}\left[\log p(D|w)\right]  \\
     %&= \mathbb{E}_{w \sim q(w|\theta)}\left[\log\frac{ q(w|\theta)}{p(w)}\right] - \mathbb{E}_{w \sim q(w|\theta)}\left[\log p(D|w)\right]\\
     &= \mathbb{E}_{w \sim q(w|\theta)}\left[\log q(w|\theta) \right]\\
     & \quad - \mathbb{E}_{w \sim q(w|\theta)}\left[\log p(w) \right]\\
     & \quad -\mathbb{E}_{w \sim q(w|\theta)}\left[\log p(D|w)\right]
    \end{split}
\end{equation}

Following the Law of Large Numbers~\cite{uhlig1996law}, we estimate this via Monte Carlo sampling:
{\small
\begin{equation}
    \label{objm}
    \begin{split}
    \text{KL}(q(w|\theta) || p(w|D)) &\approx \frac{1}{n} \sum_{i=1}^n \big( \log q(w^{(i)}|\theta) \\
    &\quad - \log p(w^{(i)}) - \log p(D|w^{(i)}) \Big)
    \end{split}
\end{equation}}
where \( w^{(i)} \) represents the ith Monte Carlo sample drawn from the variational posterior \( q(w|\theta) \). 
We incorporate this into GAN training by formulating Bayesian generator and discriminator losses with KL-divergence-based regularization applied to the network weights, as outlined in Equations~\ref{ganloss}~and~\ref{objm}. 
{\small
\begin{equation}
    \begin{split}
    \label{disloss}
        \text{Discriminator\_loss} &= \frac{1}{n} \sum_{i=1}^n \left( \log q(w_d^{(i)}|\theta_d) - \log p(w_d^{(i)}) \right) \\
        &\quad - \big( \log(D(x,w_d))\\
        & \quad + \log(1 - D(G(z,w_g),w_d)) \big)
    \end{split}
\end{equation}}

{\small
\begin{equation}
    \begin{split}
    \label{genloss}
        \text{Generator\_loss} &= \frac{1}{n} \sum_{i=1}^n \left( \log q(w_g^{(i)}|\theta_g) - \log p(w_g^{(i)}) \right) \\
        &\quad -  \log(D(G(z,w_g),w_d)) 
    \end{split}
\end{equation}}

The "D" and "G" in equations~\ref{disloss}~and~\ref{genloss} represent the discriminator and the generator network, respectively. Consequently, the pseudo-code for the learning process is structured around iteratively optimizing the parameters of both the discriminator and generator networks based on their respective loss functions, Algorithm~\ref{alg:Bayesian DCGAN}.

\begin{algorithm}[!t]
\caption{Learning procedure for BI-DCGAN} %Bayesian DCGAN
\label{alg:Bayesian DCGAN}
{\small
\begin{algorithmic}[1]
\Require m, batch size

\For{number of training iterations}
    \For{$k$ steps}
        \State Sample minibatch of $m$ noise samples $\{z^{(1)}, ..., z^{(m)}\}$ from noise distribution $p_a(z)$.
        \State Sample minibatch of $m$ examples $\{x^{(1)}, ..., x^{(m)}\}$ from data distribution $P_{\text{data}}(X)$.
        \State Sample $\epsilon $ whose entries are $i.i.d.N(0,1)$
        \State Calculate $w = \mu + \log(1 + \exp(\rho)) \odot \epsilon$
        \State Discriminator training: 
        \Indent
        \State Calculate loss as \ref{disloss} for the discriminator network
        \State Calculate the gradient of discriminator's loss with respect to $\mu_d$ and $\rho_d$:
        \Indent
        \State $\Delta\mu_d = \frac{\partial \text{loss}(w_d, \theta_d)}{\partial w_d} + \frac{\partial \text{loss}(w_d, \theta_d)}{\partial \mu_d}$
        \State $\Delta\rho_d = \frac{\partial \text{loss}(w_d, \theta_d)}{\partial w_d} \frac{\varepsilon}{1 + \exp(-\rho_d)} + \frac{\partial \text{loss}(w_d, \theta_d)}{\partial \rho_d}$
        \EndIndent
        \State Update the discriminator 
        \EndIndent
        \State Generator training: 
        \Indent
        \State Calculate loss as \ref{genloss} for the generator network
        \State Calculate the gradient of generator's loss with respect to $\mu_g$ and $\rho_g$:
        \Indent
        \State $\Delta\mu_g = \frac{\partial \text{loss}(w_g, \theta_g)}{\partial w_g} + \frac{\partial \text{loss}(w_g, \theta_g)}{\partial \mu_g}$
        \State $\Delta\rho_g = \frac{\partial \text{loss}(w_g, \theta_g)}{\partial w_g} \frac{\varepsilon}{1 + \exp(-\rho_g)} + \frac{\partial \text{loss}(w_g, \theta_g)}{\partial \rho_g}$
        \EndIndent
        \State Update the generator
        \EndIndent
    \EndFor
    
\EndFor
\end{algorithmic}}
\end{algorithm}

\section{Theoretical Proof of Diversity}
\label{proof-main}

The BI-DCGAN architecture introduced in Section~\ref{BayesDCGAN} represents a promising approach to addressing mode collapse by incorporating uncertainty through weight distributions. While the algorithmic implementation has been detailed, a critical question remains: Can we theoretically demonstrate that this Bayesian approach indeed produces more diverse samples compared to conventional DCGAN? In this section, we provide the first rigorous mathematical proof that establishes the theoretical foundation for enhanced diversity in BI-DCGAN. 

We begin by assuming that both models have been trained using their respective frameworks. We consider the generator of the conventional DCGAN, represented as a function $g_{\text{D}} $, which maps a latent space vector $v \in \mathbb{R}^{100}$ to a generated image. For BI-DCGAN, sampling a noise vector $\epsilon \sim \mathcal{N}(0, I)$  uniquely determines the generator's weights. Since the mean and variance of the weights are established during training, the BI-DCGAN generator can be expressed as a function $g_{\text{B}, \epsilon}$, explicitly capturing its dependency on $\epsilon $. To quantify the diversity of generated samples, we compute the sample covariance matrix for each generator using the following procedure:
\begin{enumerate}
    \item Sample  $N$ (a large number of) vectors $v_1,v_2,\cdots, v_N\in \mathbb{R}^{100}$ from the distribution $\mathcal{N}(0,I)$. 
    %\item Sample vector $\epsilon \sim \mathcal{N}(0, I)$.
    \item Sample matrix $\epsilon$ whose entries are i.i.d. $\sim\mathcal{N}(0,1)$.

    \item Feed the vectors from step 1 into the generators $g_{\text{D}} $ and $g_{\text{B}, \epsilon}$ to obtain the corresponding generated images for both conventional DCGAN and BI-DCGAN. %Bayesian DCGAN
    This gives the sets of images $\{g_{\text{D}}(v_1),\cdots, g_{\text{D}}(v_N)\}$ and $\{g_{\text{B},\epsilon}(v_1),\cdots, g_{\text{B},\epsilon}(v_N)\}.$
    \item Calculate the sample covariance across all components of generated images.
\end{enumerate}
The nested representation of a generator network can be expressed as:

\begin{align}
z^{(L)} &= \sigma^{(L)} \Big( W^{(L)} \ast \sigma^{(L-1)} \big( W^{(L-1)} \ast \cdots \nonumber\\
        &\quad \sigma^{(2)} \big( W^{(2)} \ast \sigma^{(1)} ( W^{(1)} \ast v \nonumber\\
        & \quad + b^{(1)} ) + b^{(2)} \big) \cdots + b^{(L-1)} \big) + b^{(L)} \Big)
\end{align}

where:
\begin{itemize}
    \item  $v$ is input tensor from latent space.
    \item $W^{(l)}$ represents convolutional filters at layer $l$.
    \item  $b^{(l)}$ is the bias terms at layer $l$.
    \item$ \sigma^{(l)}$ is the activation function at layer $l$ (such as ReLU, sigmoid, tanh).
    \item  $\ast$ denotes the convolution operation. 
    \item  $L$ is the total number of layers in the network.
\end{itemize}
We now proceed to prove the following lemmas:

\begin{lemma}
    Let $v\sim N(0,I_d)$ be an independent normal random vector and $u$ be a random matrix whose entries $u_{ij}$ are $i.i.d.$ $N(0,1)$. Then, for any arbitrary matrix of $A$, the covariance matrix of $(A\odot u)v$ is 
    \begin{align}
    \text{Cov}((A\odot u)v )=(AA^\top)\odot I
    \end{align}
    \label{cov1}

\begin{proof}
Let \( v \sim \mathcal{N}(0, I_d) \) be a standard normal random vector, and let \( u \in \mathbb{R}^{n \times d} \) be a random matrix with i.i.d.\ standard normal entries. Let \( A \in \mathbb{R}^{n \times d} \) be an arbitrary fixed matrix, and define the random vector:
\begin{align}
    x = (A \odot u) v \in \mathbb{R}^n
\end{align}
where \( \odot \) denotes the Hadamard (elementwise) product.
Since \(v\) has mean zero and is independent of \(u\), we have
   \begin{align}
     \mathrm{Cov}(x)
     \;=\;
     \mathbb{E}[\,xx^\top\,]
     \;=\;
     \mathbb{E}\bigl[\,(A \odot u)\,v\,v^\top\,(A \odot u)^\top\bigr].
   \end{align}
   By independence, \(\mathbb{E}[v\,v^\top] = I_d\), so
   \begin{align}
     \mathrm{Cov}(x) & =\mathbb{E}\bigl[(A \odot u)\,(A \odot u)^\top\bigr] \;\cdot\; \underbrace{\mathbb{E}[v\,v^\top]}_{=\,I_d}\nonumber\\
     &=\mathbb{E}\bigl[(A \odot u)\,(A \odot u)^\top\bigr].
   \end{align}
   Now, the \((i,k)\)-th entry of \((A \odot u)\,(A \odot u)^\top\) is
   \begin{align}
     \sum_{j=1}^d \bigl(A_{ij} \,u_{ij}\bigr)\,\bigl(A_{kj}\,u_{kj}\bigr).
   \end{align}
   Taking the expectation and using the fact that \(u_{ij}\) are i.i.d.\ \(\mathcal{N}(0,1)\) we have:
   \begin{itemize}
       \item If \(i \neq k\), then \(u_{ij}\) and \(u_{kj}\) are independent and each has mean 0, so
     \(\mathbb{E}[u_{ij}\,u_{kj}] = 0\).
        \item  If \(i = k\), then \(\mathbb{E}[u_{ij}^2] = 1\).
   \end{itemize}
   Therefore,
   \begin{align}   
     \mathbb{E}\Bigl[\sum_{j=1}^d (A_{ij}u_{ij})(A_{kj}u_{kj})\Bigr]
     \;=\;
     \begin{cases}
       \sum_{j=1}^d A_{ij}^2 & \text{if } i = k,\\
       0 & \text{if } i \neq k.
     \end{cases}
   \end{align}
   
   This shows that \(\mathbb{E}[(A \odot u)\,(A \odot u)^\top]\) is diagonal, with diagonal entries \(\sum_{j=1}^d A_{ij}^2\). The matrix \(AA^\top\) has entries \((AA^\top)_{i,k} = \sum_{j=1}^d A_{ij}\,A_{kj}\). Taking the Hadamard product \((AA^\top)\odot I\) sets all off-diagonal entries to 0 and keeps the diagonal entries \(\sum_{j=1}^d A_{ij}^2\). Hence,
   \begin{align}
     \mathbb{E}\bigl[(A \odot u)\,(A \odot u)^\top\bigr]
     \;=\;
     (AA^\top)\,\odot\,I.
   \end{align}
As a result,
   \begin{align}
     \mathrm{Cov}\bigl((A \odot u)\,v\bigr)
     \;=\;
     (AA^\top)\,\odot\,I,
   \end{align}
   which completes the proof.

\end{proof}
\end{lemma}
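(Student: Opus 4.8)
The plan is to reduce everything to a second-moment computation, exploiting that $v$ is independent of $u$ and has mean zero. First I would set $x := (A\odot u)v$ and note that $x$ has mean zero: by independence, $\mathbb{E}[x] = \mathbb{E}[A\odot u]\,\mathbb{E}[v] = 0$, so $\mathrm{Cov}(x) = \mathbb{E}[xx^\top] = \mathbb{E}\big[(A\odot u)\,vv^\top\,(A\odot u)^\top\big]$. Conditioning on $u$ and using $\mathbb{E}[vv^\top]=I_d$, this collapses to $\mathbb{E}_u\big[(A\odot u)(A\odot u)^\top\big]$, so the latent noise $v$ has been fully integrated out and only the weight noise $u$ remains to be handled.

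Next I would expand the $(i,k)$ entry of the matrix inside the expectation as $\sum_{j=1}^d (A_{ij}u_{ij})(A_{kj}u_{kj}) = \sum_{j=1}^d A_{ij}A_{kj}\,u_{ij}u_{kj}$ and move the expectation inside the finite sum. The decisive fact is that the entries of $u$ are i.i.d.\ $\mathcal{N}(0,1)$: for $i\neq k$ the variables $u_{ij}$ and $u_{kj}$ are independent with mean zero, hence $\mathbb{E}[u_{ij}u_{kj}]=0$; for $i=k$ we have $\mathbb{E}[u_{ij}^2]=1$. So the $(i,k)$ entry equals $\sum_{j=1}^d A_{ij}^2$ when $i=k$ and $0$ otherwise, i.e.\ a diagonal matrix whose $i$-th diagonal entry is $\sum_{j} A_{ij}^2 = (AA^\top)_{ii}$, which is precisely $(AA^\top)\odot I$. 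Chaining the two steps yields $\mathrm{Cov}((A\odot u)v) = (AA^\top)\odot I$, with no hypothesis on $A$ needed.

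There is no genuine obstacle here; the argument is a routine evaluation of $\mathbb{E}[xx^\top]$. The only point that deserves care is tracking the index structure imposed by the Hadamard product: $A\odot u$ pairs $A_{ij}$ with $u_{ij}$ at matching positions, so the vanishing of the off-diagonal terms is driven specifically by the \emph{row-wise} independence of $u$ (distinct output coordinates see independent weight noise), while the within-row structure of $A$ is what survives on the diagonal. This is also the conceptual payoff on which the remaining diversity argument builds: injecting independent Gaussian noise into the weights contributes a full-rank diagonal term to the covariance of the generator's pre-activations, a term that can only add to, and generically enlarges, the spread of the generated outputs.
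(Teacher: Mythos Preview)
Your proposal is correct and follows essentially the same route as the paper: reduce $\mathrm{Cov}(x)$ to $\mathbb{E}[(A\odot u)(A\odot u)^\top]$ by integrating out $v$ via its independence from $u$ and $\mathbb{E}[vv^\top]=I_d$, then compute the $(i,k)$ entry using the i.i.d.\ structure of $u$ to obtain the diagonal matrix with entries $\sum_j A_{ij}^2 = (AA^\top)_{ii}$. Your explicit conditioning on $u$ is in fact slightly cleaner than the paper's shorthand factorization, but the logic and all key steps are identical.
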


\begin{lemma}
    If $W=\mu + \log(1+\exp(\rho))\odot \epsilon$ with $\epsilon$ being a random matrix with i.i.d. $\mathcal{N}(0,1)$ entries, the covariance matrix for $Wv+b$ is calcuated as:
    \begin{align}
    \mu\mu^\top + %\log(1+\exp(\rho))\log(1+\exp(\rho))^\top\]
    (\log(1+\exp(\rho)) \log(1+\exp(\rho))^\top) \odot I
    \end{align}
    where $v\sim\mathcal{N}(0,I_d)$ is a normal random vector.
    \label{cov2}
\end{lemma}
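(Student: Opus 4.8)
The plan is to reduce the statement to Lemma~\ref{cov1} using the bilinearity of the covariance operator. Write $s := \log(1+\exp(\rho))$, so that $W = \mu + s\odot\epsilon$ and therefore $Wv + b = \mu v + (s\odot\epsilon)v + b$. Since $b$ is deterministic it drops out of any covariance, and since the generator weights depend only on $\epsilon$, the random matrix $\epsilon$ is independent of the latent vector $v$; I would record these facts, together with $\mathbb{E}[v]=0$, $\mathbb{E}[\epsilon]=0$, and $\mathbb{E}[vv^\top]=I_d$, at the outset.

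First I would expand
\[
\text{Cov}(Wv+b)=\text{Cov}(\mu v)+\text{Cov}\big((s\odot\epsilon)v\big)+\text{Cov}\big(\mu v,(s\odot\epsilon)v\big)+\text{Cov}\big((s\odot\epsilon)v,\mu v\big).
\]
The first term equals $\mu\,\text{Cov}(v)\,\mu^\top=\mu\mu^\top$. The second term is precisely the object of Lemma~\ref{cov1} applied with the fixed matrix $A=s$, so it equals $(ss^\top)\odot I$. For the two cross terms I would argue that they vanish: an entrywise computation of $\mathbb{E}[\mu v v^\top (s\odot\epsilon)^\top]$ factors, by independence of $v$ and $\epsilon$, into $\mu\,\mathbb{E}[vv^\top]\,\mathbb{E}[s\odot\epsilon]^\top$, and $\mathbb{E}[s\odot\epsilon]=0$ since $\mathbb{E}[\epsilon]=0$; the other cross term is its transpose and likewise vanishes. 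Summing the four contributions gives $\mu\mu^\top+(ss^\top)\odot I$, which is the claimed expression with $s=\log(1+\exp(\rho))$. I would also note in passing that $\mathbb{E}[Wv+b]=b$, so this matrix is simultaneously the covariance and the centered second moment $\mathbb{E}[(Wv)(Wv)^\top]$, which legitimizes the bilinear expansion.

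The calculation is mostly routine once the additive decomposition $Wv=\mu v+(s\odot\epsilon)v$ is written down; the one point that genuinely needs care is the vanishing of the cross-covariance terms. There it matters that $\epsilon$ is \emph{independent} of $v$ (not merely uncorrelated with it), so that the expectation of the product factors through $\mathbb{E}[\epsilon]=0$, and one must track matrix shapes carefully so that Lemma~\ref{cov1} applies verbatim with $A=\log(1+\exp(\rho))$ rather than requiring a fresh Hadamard-product argument.
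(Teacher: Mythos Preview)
Your proposal is correct and follows essentially the same route as the paper: expand $\mathrm{Cov}(Wv+b)$ bilinearly into the four terms, compute $\mathrm{Cov}(\mu v)=\mu\mu^\top$, invoke Lemma~\ref{cov1} with $A=s$ for the $(s\odot\epsilon)v$ term, and show the cross terms vanish via independence of $v$ and $\epsilon$ together with $\mathbb{E}[\epsilon]=0$. Your factorization $\mu\,\mathbb{E}[vv^\top]\,\mathbb{E}[s\odot\epsilon]^\top$ is a slightly cleaner packaging of the same cross-term argument the paper gives entrywise, but the logic is identical.
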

\begin{proof}
    \begin{align}
        \textrm{Cov}(Wv+b) & = \textrm{Cov}(Wv)\nonumber\\
        & = \textrm{Cov}( (\mu + \log(1+\exp(\rho))\odot \epsilon) v ) \nonumber\\
        & = \textrm{Cov}( (\mu v+ (\log(1+\exp(\rho))\odot \epsilon) v ) \nonumber\\
        %& = \textrm{Cov} (\mu v)+\textrm{Cov} ( (\log(1+\exp(\rho))\odot \epsilon) v)\\
        & = \textrm{Cov}(\mu v) \nonumber \\
        & \quad + \textrm{Cov}\left((\log(1+\exp(\rho)) \odot \epsilon) v\right) \nonumber\\
        &\quad + \textrm{Cov}\left(\mu v, (\log(1+\exp(\rho)) \odot \epsilon) v\right) \nonumber\\
        & \quad + \textrm{Cov}\left( (\log(1+\exp(\rho)) \odot \epsilon) v, \mu v\right) 
     \end{align}
According to the covariance definition, for cross-covariance we have: %new

    {\footnotesize
    \begin{align}
    \textrm{Cov}\big((\log(1+\exp(\rho)) \odot \epsilon) v, \mu v\big) & = \mathbb{E}[ \big((\log(1+\exp(\rho)) \odot \epsilon) v \nonumber\\
    &  - \mathbb{E}[(\log(1+\exp(\rho)) \odot \epsilon) v] \big) \nonumber\\
    &\quad \cdot (\mu v - \mathbb{E}[\mu v])^\top]
    \end{align}
    }
    Since \(\epsilon\) is zero-mean and $v_i , \epsilon _{kj}$ are independent, we have:
    \begin{align}
    \mathbb{E}[(\log(1+\exp(\rho)) \odot \epsilon) v])= \mathbf{0}
    \end{align}
    Also, as $v\sim\mathcal{N}(0,I_d)$, $\mathbb{E}[\mu v] = 0$. Therefore, the cross-covariance term will be simplified as follows: 
    {\small
    \begin{align}
    \textrm{Cov}\big((\log(1+\exp(\rho)) \odot \epsilon) v, \mu v\big)  &=  \mathbb{E}[((\log(1+\exp(\rho))  \nonumber\\
    & \qquad \odot \epsilon) v)(\mu v) ^\top]
    \end{align}}
    A similar argument applies to the covariance term $\textrm{Cov}\left( \mu v,(\log(1+\exp(\rho)) \odot \epsilon) v\right)$, so that:
    {\small
    \begin{align}
    \textrm{Cov}\left( \mu v,(\log(1+\exp(\rho)) \odot \epsilon) v\right) & = \mathbb{E}[ (\mu v)((\log(1+\exp(\rho)) \nonumber\\
    & \qquad \odot \epsilon) v )^\top]
    \end{align}
    }

    Furthermore, each entry of $((\log(1+\exp(\rho)) \odot \epsilon) v )(\mu v)^\top$ is a linear combination of $\epsilon_{ij}$ with coefficients being independent from $\epsilon$. 

    Therefore, $\mathbb{E}[ ((\log(1+\exp(\rho)) \odot \epsilon) v )(\mu v)^\top]  = \mathbf{0}$.

    \begin{align}
    &\implies \mathrm{Cov}\left(\mu v, (\log(1+\exp(\rho)) \odot \epsilon) v\right)\nonumber\\
    &= \textrm{Cov}\left( (\log(1+\exp(\rho)) \odot \epsilon) v, \mu v\right) = \mathbf{0}.
    \end{align}
   
     It is worth noting that $\textrm{Cov}(\mu v)$ is equivalent to $\textrm{Cov}(\mu v, \mu v)$, and similarly, $\textrm{Cov} ( (\log(1+\exp(\rho))\odot \epsilon) v) $ is equivalent to $ \textrm{Cov} ( (\log(1+\exp(\rho))\odot \epsilon) v,(\log(1+\exp(\rho))\odot \epsilon) v)$. For simplicity and brevity, we use this notation throughout the text. Therefore, we have:
     {\small
     \begin{align}
         \textrm{Cov}(Wv+b) & = \textrm{Cov} (\mu v)+\textrm{Cov} ( (\log(1+\exp(\rho))\odot \epsilon) v)
     \end{align}}
     
     \noindent To compute $\textrm{Cov} (\mu v)$, we have:

    \begin{align}
    \textrm{Cov}(\mu v) = \mathbb{E}[(\mu v)(\mu v)^\top] - \mathbb{E}[\mu v]\mathbb{E}[\mu v]^\top
    \end{align}
    
    \[
    v \sim \mathcal{N}(0, I_d) \implies \mathbb{E}[v] = 0 \implies \mathbb{E}[\mu v] = \mu \cdot \mathbb{E}[v] = 0
    \]
    Therefore, the second term is zero:
    \begin{align}
        \textrm{Cov}(\mu v) & = \mathbb{E}[(\mu v)(\mu v)^\top]\nonumber\\
        %& = \mathbb{E}[\mu (v v^\top) \mu^\top]\nonumber\\
        & = \mu \mathbb{E}[v v^\top] \mu^\top
    \end{align}
    On the other hand, we have:
    
    \begin{align}
    \mathbb{E}[v v^\top] = \textrm{Cov}(v) + \mathbb{E}[v]\mathbb{E}[v]^\top = I
    \end{align}
    By substitution:
    \begin{align}
    \textrm{Cov}(\mu v) = \mu I \mu^\top = \mu \mu^\top
    \end{align}

     In addition, by applying Lemma \ref{cov1}, we can rewrite $\textrm{Cov} ( (\log(1+\exp(\rho))\odot \epsilon) v)$ as follows: 
     \begin{align}
       \textrm{Cov} ( (\log(1+\exp(\rho))\odot \epsilon) v) &= \big( \log(1+\exp(\rho))\nonumber \\
       & \qquad \log(1+\exp(\rho))^\top \big) \odot I
     \end{align}

     Thus, the following holds, which concludes the proof:
    
     \begin{align} 
     \textrm{Cov}(Wv+b)&= \mu\mu^\top \nonumber\\
     & + (\log(1+\exp(\rho)) \log(1+\exp(\rho))^\top) \odot I
     \end{align}
     
\end{proof}
To provide an analytical understanding from the covariance matrix of the generated images in BI-DCGAN and conventional DCGAN, the following assumptions have been made:
\begin{itemize}
    \item \textbf{Assumption 1:} We ignore the activation functions.
    \item \textbf{Assumption 2:} We consider a one-layer linear model. Therefore, we can look at the output of the model as a linear function $z=Wv+b$. From this point forward, with a slight abuse of notation, we replace $\ast$ with multiplication to simplify matrix calculations.
    \item \textbf{Assumption 3-a:} We assume the mean of weights in BI-DCGAN are similar to the weights in conventional DCGAN, i.e., $\mu_i\approx w_i$. This assumption can be justified by setting $\epsilon=0$ for generating images in BI-DCGAN. This makes $\mu_i$ a local minimum for the generator of DCGAN as well. Therefore, $\mu_i$ is also an answer to our conventional DCGAN problem, and we can make a similarity assumption between $\mu_i$ and $w_i$.
    \item \textbf{Assumption 3-b: $\mu = W$}.
\end{itemize}

We now introduce Theorem \ref{thm} to show that the eigenvalues of BI-DCGAN %Bayesian DCGAN
are greater than or equal to those of conventional DCGAN.
\begin{theorem}
\label{thm}
    Let $\lambda_1\geq \lambda_2\geq \cdots \geq\lambda_n$ be the eigenvalues of the covariance matrix of $g_{\text{D}}(v)$ and $\mu_1\geq \mu_2\geq \cdots \geq\mu_n$ be the eigenvalues of the covariance matrix of $g_{\text{B},\epsilon}(v)$ when $v\sim\mathcal{N}(0,I_d)$ and $\epsilon$ being a random matrix with i.i.d. $\mathcal{N}(0,1)$  entries. Under the stated assumptions, we have:
    \begin{align}
    \mu_i\geq \lambda_i
    \end{align}
    for any $1\leq i \leq n$. 
\end{theorem}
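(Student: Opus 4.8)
The plan is to reduce the claim to a classical eigenvalue perturbation inequality, after using Lemma~\ref{cov2} to put both covariance matrices in closed form.

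First I would pin down the two covariance matrices under the stated assumptions. Ignoring activations and taking a single linear layer, $g_{\mathrm{D}}(v) = Wv + b$ and $g_{\mathrm{B},\epsilon}(v) = \bigl(\mu + \log(1+\exp(\rho))\odot\epsilon\bigr)v + b$. For the deterministic DCGAN weights a direct computation gives $\Sigma_{\mathrm{D}} := \mathrm{Cov}(g_{\mathrm{D}}(v)) = W\,\mathbb{E}[vv^\top]\,W^\top = WW^\top$, while Lemma~\ref{cov2} yields
\begin{align*}
\Sigma_{\mathrm{B}} := \mathrm{Cov}(g_{\mathrm{B},\epsilon}(v)) = \mu\mu^\top + \bigl(\log(1+\exp(\rho))\,\log(1+\exp(\rho))^\top\bigr)\odot I .
\end{align*}
Invoking Assumption~3-b ($\mu = W$), the two leading terms coincide, so with $s := \log(1+\exp(\rho))$ the difference is
\begin{align*}
\Delta := \Sigma_{\mathrm{B}} - \Sigma_{\mathrm{D}} = \bigl(s\,s^\top\bigr)\odot I = \mathrm{diag}\bigl(s_1^2,\dots,s_n^2\bigr).
\end{align*}

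Next I would note that $\Delta$ is symmetric and positive semidefinite, being diagonal with entries $s_j^2 \ge 0$; hence $\Sigma_{\mathrm{D}}$ and $\Sigma_{\mathrm{B}} = \Sigma_{\mathrm{D}} + \Delta$ are symmetric with real eigenvalues, which we list in decreasing order as in the statement. The conclusion then follows from Weyl's monotonicity theorem: if $\Delta \succeq 0$ then $\lambda_i(\Sigma_{\mathrm{D}}+\Delta) \ge \lambda_i(\Sigma_{\mathrm{D}})$ for every $i$. To keep the argument self-contained I would prove this via the Courant--Fischer min--max principle: for any $i$-dimensional subspace $\mathcal{V}$ and any unit vector $x\in\mathcal{V}$, $x^\top\Sigma_{\mathrm{B}}x = x^\top\Sigma_{\mathrm{D}}x + x^\top\Delta x \ge x^\top\Sigma_{\mathrm{D}}x$ since $x^\top\Delta x \ge 0$; taking the minimum over such $x$ and then the maximum over $\mathcal{V}$ gives $\mu_i \ge \lambda_i$, as claimed.

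I do not anticipate a deep obstacle: the real content is already supplied by Lemma~\ref{cov2}, and what remains is bookkeeping plus a standard inequality. The one point requiring care is checking that Assumption~3-b genuinely cancels the $\mu\mu^\top$ term, so that the difference of the two covariance matrices is \emph{exactly} the positive semidefinite diagonal matrix $\Delta$, and that the eigenvalues on both sides are sorted consistently so the inequality can be asserted coordinate by coordinate.
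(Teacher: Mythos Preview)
Your proof follows essentially the same line as the paper's: compute $\Sigma_{\mathrm{D}}=WW^\top$ directly, invoke Lemma~\ref{cov2} together with Assumption~3-b to write $\Sigma_{\mathrm{B}}=\Sigma_{\mathrm{D}}+\Delta$ with $\Delta$ diagonal and positive semidefinite, and then apply Weyl's inequality (which the paper simply cites, whereas you additionally justify it via Courant--Fischer). The only slip is notational: since $\rho$ and hence $s=\log(1+\exp(\rho))$ are $n\times d$ matrices, the diagonal entries of $(ss^\top)\odot I$ are the row-norm squares $\sum_{k} s_{jk}^2$ rather than scalars $s_j^2$, but this does not affect the argument.
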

\begin{proof}
    It follows from Assumption $1$ and $2$ that we can assume $g_{\text{D}}(v)=Wv+b$ and $g_{\text{B},\epsilon}(v)=W'v+b$ where $W'=\mu + \log(1+\exp(\rho))\odot \epsilon$. For $g_{\text{D}}(v)$, we have:
    
    \begin{align}
    \textrm{Cov}(Wv+b)=\textrm{Cov}(Wv)=W\textrm{Cov}(v)W^\top=WW^\top
    \end{align}
    By applying Lemma \ref{cov2} and considering Assumption $3-b$, we obtain the following for $g_{\text{B},\epsilon}(v)$:
    {\footnotesize
\begin{align}
    \textrm{Cov}(W'v+b) & = \mu\mu^\top + %\log(1+\exp(\rho))\log(1+\exp(\rho))^\top\\
    (\log(1+\exp(\rho)) \log(1+\exp(\rho))^\top) \nonumber\\
    &\quad\odot I\nonumber\\
    & = WW^\top + %\log(1+\exp(\rho))\log(1+\exp(\rho))^\top
    (\log(1+\exp(\rho)) \log(1+\exp(\rho))^\top)\nonumber\\
    &\quad\odot I
\end{align}
}
    
    Through comparing the covariance matrices obtained for $g_{\text{D}}(v)$ and $g_{\text{B},\epsilon}(v)$, we get:
    {\small
    \begin{align}
    \textrm{Cov}(W'v+b)& = \textrm{Cov}(Wv+b)\nonumber\\
    %\log(1+\exp(\rho))\log(1+\exp(\rho))^\top\]
    &+(\log(1+\exp(\rho)) \log(1+\exp(\rho))^\top) \odot I
    \end{align}}

    Both $\textrm{Cov}(Wv+b)$ and $(\log(1+\exp(\rho))\log(1+\exp(\rho))^\top) \odot I$ are positive semi-definite matrices. Furthermore, according to Weyl's inequality, $i$-th eigenvalue of $\textrm{Cov}(Wv+b)+(\log(1+\exp(\rho))\log(1+\exp(\rho))^\top) \odot I $ is greater than or equal to the corresponding eigenvalue of $\textrm{Cov}(Wv+b)$. This completes the proof.
\end{proof}
\begin{figure*}[ht]
    \centering
    \includegraphics[width=0.8\linewidth]{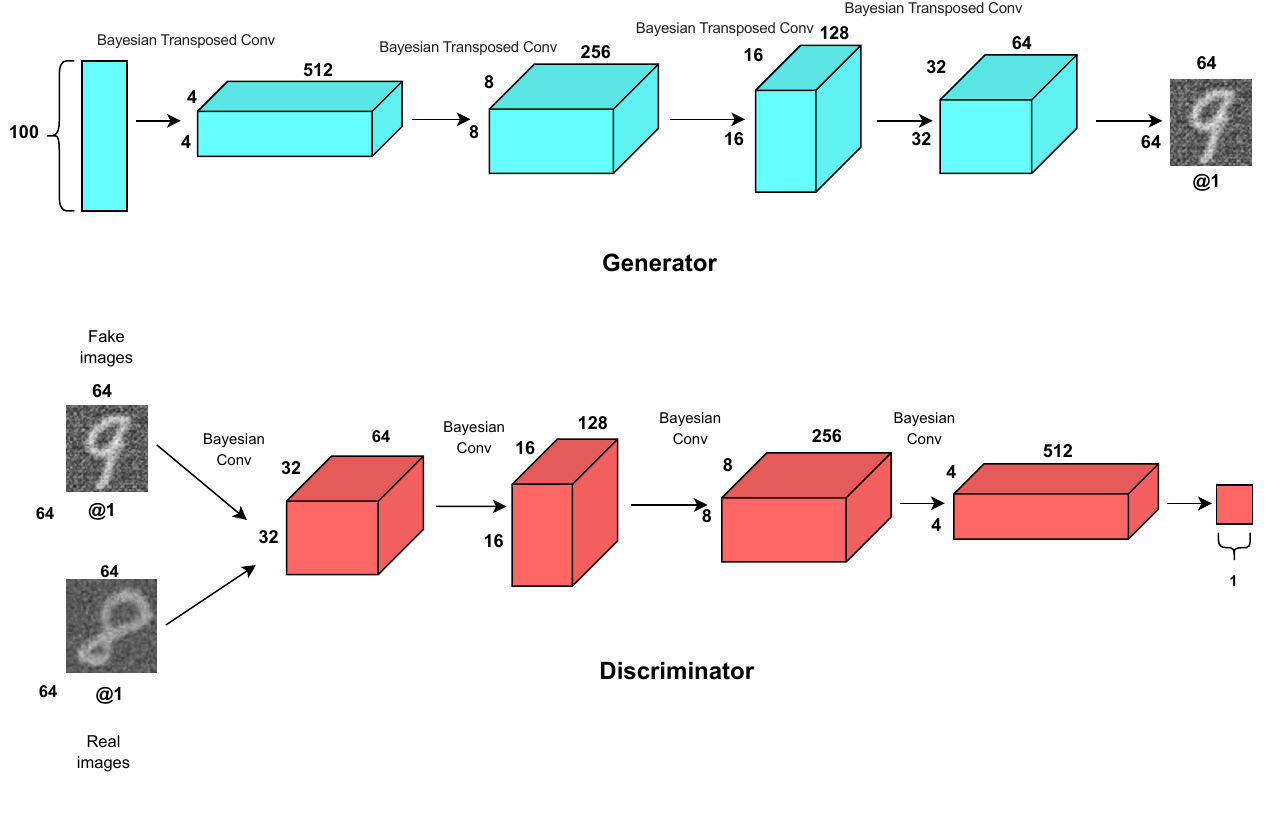} 
    %\vspace{11pt}
    \caption{BI-DCGAN architecture used in the present study.}
    %\vspace{11pt}
    \label{fig:dcbgan}
\end{figure*}

The consistently larger eigenvalues in the BI-DCGAN %Bayesian DCGAN
matrix indicate a greater covariance matrix, which reflects a broader spread and higher variability in the generated samples.

\section{Experiments}

Having established the theoretical superiority of BI-DCGAN in generating diverse samples through our mathematical proof, we now present empirical evidence that validates these theoretical findings. In addition, we evaluate the performance of our proposed BI-DCGAN model using various benchmark datasets. Our experiments are designed to answer the following research questions:

\begin{itemize}

    \item \textbf{RQ1:} Does the experimental analysis confirm the theoretical results about sample diversity?

    \item \textbf{RQ2:} How does the inclusion of images generated by BI-DCGAN impact the performance of a simple neural network model compared to conventional training methods?
\end{itemize}

\subsection{Dataset and Preprocessing}

We evaluated our approach using four benchmark datasets: MNIST, CIFAR-10, Fashion-MNIST, and SVHN. MNIST comprises $60,000$ training and $10,000$ test grayscale images of handwritten digits $(0-9)$, each sized $28\times28$. CIFAR-10 consists of $60,000$ color images ($32\times32$) across $10$ classes, with $50,000$ for training and $10,000$ for testing. Fashion-MNIST contains $70,000$ grayscale images of clothing items in $10$ categories, while SVHN includes over $600,000$ color images ($32\times32$) of house numbers, with $73,257$ for training and $26,032$ for testing. For consistency, all images were converted to grayscale, resized to $64\times64$, and augmented with Gaussian noise during training.

\subsection{Model Architecture and Experimental Setup}

The BI-DCGAN architecture used in this study is illustrated in Figure~\ref{fig:dcbgan}. It incorporates Bayesian 2D convolutional and transpose convolutional layers. All experiments were performed on a single NVIDIA A5000 GPU with 24GB of memory.

\begin{figure*}[t]
\centering
\begin{subfigure}[t]{0.93\columnwidth}
  \centering
  \includegraphics[width=\linewidth]{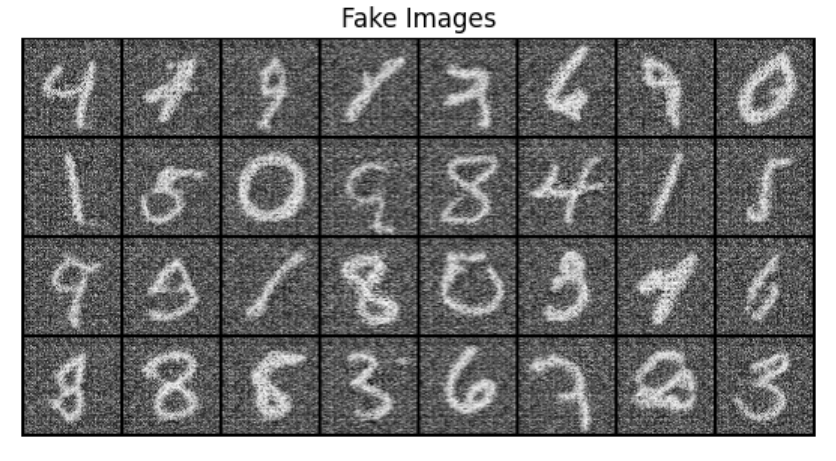}
  %\vspace{11pt}
  \caption{BI-DCGAN %Bayesian DCGAN
  on MNIST.}
  %\vfill
  \label{fig:bmnist}
\end{subfigure}%\hfill
\begin{subfigure}[t]{0.96\columnwidth}%{0.23\textwidth}
  \centering
  \includegraphics[width=\linewidth]{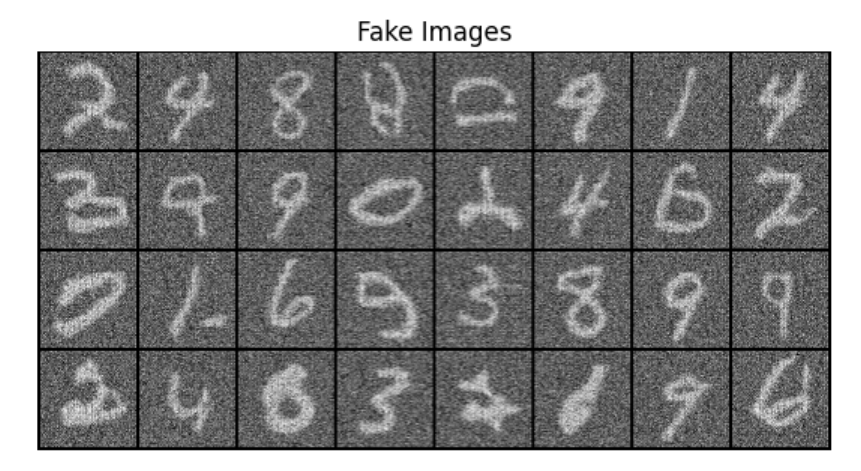}
  %\vspace{11pt}
  \caption{DCGAN on MNIST.}
  %\vspace{11pt}
  \label{fig:tmnist}
\end{subfigure}
%\vspace{15pt}
\begin{subfigure}[t]{0.94\columnwidth}
  \centering
  \includegraphics[width=\linewidth]{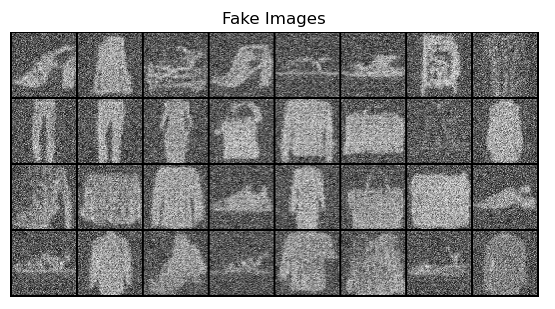}
  %\vspace{11pt}
  \caption{BI-DCGAN %Bayesian DCGAN
  on Fashion-MNIST.}
  %\vspace{11pt}
  \label{fig:bmnistf}
\end{subfigure}%\hfill
\begin{subfigure}[t]{0.96\columnwidth}
  \centering
  \includegraphics[width=\linewidth]{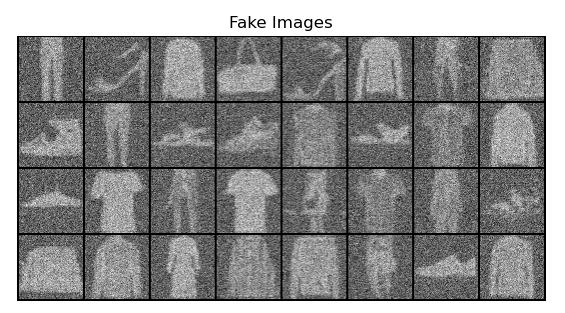}
  %\vspace{11pt}
  \caption{DCGAN on Fashion-MNIST.}
  %\vspace{11pt}
  \label{fig:tmnistf}
\end{subfigure}
\caption{Samples generated from random noise by BI-DCGAN %Bayesian DCGAN 
and conventional DCGAN.}
%\vspace{11pt}
\label{fig:test}
\end{figure*}

\subsection{Diversity Analysis of Generated Samples}
\label{Subdiv}

In line with our theoretical proof presented in Section ~\ref{proof-main}, we conducted a mathematical analysis to assess sample diversity. This analysis quantifies the diversity in images generated by BI-DCGAN and compares it to that of images produced by conventional DCGAN. The methodology involves the creation of two distinct datasets, each consisting of $32,000$ images produced by their respective GAN algorithms. Each image within these datasets possesses dimensions of $64\times 64$, amounting to a total of $4096$ pixels, and is treated as a vector for the purpose of this analysis. In viewing each generated image as a vector with a dimensionality of $4096$, we computed the sample covariance for each dataset, resulting in two covariance matrices of size $4096\times 4096$. This mathematical approach enables a comprehensive understanding of the relationships and variations present among the pixels of the generated images, facilitating a quantitative assessment of diversity.

Figures~\ref{fig:bmnist}~\&~\ref{fig:tmnist} showcase a selection of randomly generated MNIST digit samples produced by both the BI-DCGAN and the conventional DCGAN models, respectively. Additionally, Figures~\ref{fig:bmnistf}~\&~\ref{fig:tmnistf} display randomly generated samples from the Fashion-MNIST dataset, produced by the same models. Notably, the BI-DCGAN model exhibits the ability to generate images that closely resemble those from the real datasets. This demonstrates the model's enhanced capacity to capture and replicate the intricate patterns and features present in the original datasets, emphasizing the efficacy of the Bayesian approach in improving generative performance.

Denoting the $32,000$ images generated by conventional DCGAN as $d_1, d_2, \ldots, d_N$ and those generated by BI-DCGAN as $b_1, b_2, \ldots, b_N$ (where $N=32000$), individual components of these vectors are represented as follows:

\[d_i=(d_i^1,\cdots, d_i^{4096})\]
\[b_i=(b_i^1,\cdots, b_i^{4096})\]

The population (or sample) mean is defined as:
\[\bar{d}=(\bar{d}^1,\cdots, \bar{d}^{4096})\]
where for any $1\leq j\leq 4096$: 
\begin{align}  
\bar{d}^j = \frac{1}{N}\left(\sum_{i=1}^N d_i^j\right)
\end{align}

This definition holds for the images generated by BI-DCGAN as well. Further, for any $1\leq r,s\leq 4096$, the population covariance between $r^{th}$ and $s^{th}$ components is defined as:
\begin{align}
Cov(d^r, d^s) = \frac{1}{N}\left(\sum_{i=1}^N (d_i^r - \bar{d}^r)(d_i^s - \bar{d}^s)\right) 
\end{align}

This computation results in the $(r,s)$ entry of the population covariance matrix for the data generated by conventional DCGAN, and a similar definition can be applied for BI-DCGAN. According to the theoretical proof of diversity (Section~\ref{proof-main}) the diversity criterion is the eigenvalues of the covariance matrix. Let 
\[\lambda_1 \geq \lambda_2 \geq \cdots \geq \lambda_{4096}\]
denote the sorted eigenvalues of the population covariance matrix for the conventional DCGAN, and
\[\mu_1 \geq \mu_2 \geq \cdots \geq \mu_{4096}\] 
represent the corresponding eigenvalues for the BI-DCGAN. From our experiment, it was observed that:
\begin{align}
    \lambda_1 < \mu_1, \quad \lambda_2 < \mu_2, \quad \dots, \quad \lambda_{4096} < \mu_{4096}.
\end{align}

Table~\ref{tab:eigenvalues} shows the first seven eigenvalues (\(\mu_i\) and \(\lambda_i\)) for BI-DCGAN and conventional DCGAN, respectively, across all four datasets. The observation that the claim holds for the first seven terms, and, in fact, for all indices $j$ (i.e., $\mu_j > \lambda_j$), provides compelling evidence supporting the assertion that BI-DCGAN exhibits greater diversity when compared to conventional DCGAN. 
Therefore, the mathematical analysis provides strong support for our theoretical proof, %(Section~\ref{proof-main}), 
confirming that the BI-DCGAN model generates a more diverse set of samples compared to the conventional DCGAN.

%\vspace{11pt}
\begin{table}[!t]

\centering
\begin{tabular}{|>{\centering\arraybackslash}m{2.3cm}|>{\centering\arraybackslash}m{2.2cm}|>{\centering\arraybackslash}m{2.2cm}|}

\hline
Dataset & BI-DCGAN  \newline $\mu$ & Conventional DCGAN \newline $\lambda$ \\

\hline
& 26.298 & 11.399 \\
&18.938 & 9.089 \\
&16.208 & 7.628 \\
\textbf{MNIST}&14.185 & 6.457 \\
&11.764& 5.500 \\
&11.603 & 4.990 \\
&8.771&3.967\\
\hline

& 16.581 & 11.863 \\
&7.147 & 5.039 \\
&4.606 & 2.909 \\
\textbf{CIFAR-10 }&2.168 & 1.601 \\
&2.065& 1.403 \\
&1.889 & 1.249 \\
&1.730 & 1.009 \\
\hline

& 48.975 & 20.173 \\
&33.545 & 13.182 \\
&13.260 & 4.939 \\
\textbf{Fashion-}&10.131 & 4.422 \\
\textbf{MNIST}&7.629& 3.295 \\
&7.161 & 2.872 \\
&5.436 & 2.328 \\
\hline

& 50.239 & 36.729 \\
&6.234 & 3.943 \\
&5.836 & 3.279\\
\textbf{SVHN}&4.955 & 2.468 \\
&2.342& 1.312 \\
&1.890 & 1.141 \\
&1.722 & 0.990 \\
\hline

\end{tabular}
\caption{Comparing Eigenvalues Across Different Datasets.}
%\vspace{11pt}
\label{tab:eigenvalues}
\end{table}

\subsection{Impact on Model Performance}

Furthermore, to validate our findings, we compared two models: one trained solely on a $10\%$ subset of the MNIST dataset and another trained on a combination of a $10\%$ subset of the MNIST dataset and approximately $1{,}400$ images generated by BI-DCGAN for the MNIST dataset. The generated images underwent a rigorous labeling process to ensure quality and accuracy. Each image was independently labeled by five individuals, and only those for which all five annotators unanimously agreed on the label were included in the dataset. This stringent criterion was implemented to eliminate ambiguity and maintain high-quality training data. Both models were evaluated on unseen images from BI-DCGAN, and the results are presented in Table~\ref{tab:general}. Interestingly, even when trained on a small amount of generated data, the model exhibited better performance and greater generalization. 

On the other hand, in the pursuit of reducing uncertainty in neural networks, deep ensembles have gained significant attention. This method involves training neural networks from scratch multiple times on the dataset, resulting in distinct models. During testing, the dataset is passed through each model, and the final output is obtained by averaging their outputs. Ensemble strategies vary in their approaches to selecting baseline classifiers for training, with two primary types: homogeneous and heterogeneous ensembles. While homogeneous ensembles utilize the same type of baseline classifiers trained on different data subsets, heterogeneous ensembles employ various types of classifiers trained on the same dataset \cite{mohammed2023comprehensive}. In our study, we chose heterogeneous ensembles as the baseline method for uncertainty reduction (Table \ref{tab:general}) to compare the performance of the model trained on a combination of MNIST and the generated images with that of a model trained using heterogeneous ensemble learning. Notably, the model trained on the combined dataset outperformed the model trained through ensemble learning. It is worth mentioning that we employed a simple model with only two convolutional layers and trained it for a few epochs to facilitate result comparison.

\begin{table}[!t]
\centering
\begin{tabular}{|>{\centering\arraybackslash}m{2.2cm}|>{\centering\arraybackslash}m{2.2cm}|>{\centering\arraybackslash}m{2.2cm}|}
\hline
Trained on MNIST & Ensemble learning \newline on MNIST & Trained on MNIST \newline and generated images \\
\hline
82\% & 83.6\% & 86\%\\
\hline
\end{tabular}
\caption{Comparison of different models.}
\label{tab:general}
\end{table}

\section{Conclusion}

This study addressed mode collapse in DCGAN by introducing BI-DCGAN, a Bayesian framework that incorporates uncertainty modeling via Bayes by Backprop and mean-field variational inference. A central contribution of our work is the first rigorous mathematical proof demonstrating that BI-DCGAN generates more diverse samples than conventional DCGANs, based on a covariance matrix analysis. This theoretical finding is further validated by empirical evaluations across benchmark datasets, which reveal larger eigenvalues in the generated sample covariances, indicative of a broader spread and increased variability. These outcomes strongly support our theoretical result and highlight the effectiveness of the Bayesian approach in addressing mode collapse in generative models.

\bibliography{custom}

\begin{thebibliography}{41}
\providecommand{\natexlab}[1]{#1}

\bibitem[{Arjovsky and Bottou(2017)}]{Arjovsky2017TowardsPM}
Arjovsky, M.; and Bottou, L. 2017.
\newblock Towards Principled Methods for Training Generative Adversarial Networks.
\newblock \emph{ArXiv}, abs/1701.04862.

\bibitem[{Arjovsky, Chintala, and Bottou(2017)}]{arjovsky2017wasserstein}
Arjovsky, M.; Chintala, S.; and Bottou, L. 2017.
\newblock Wasserstein generative adversarial networks.
\newblock In \emph{International conference on machine learning}, 214--223. PMLR.

\bibitem[{Blundell et~al.(2015)Blundell, Cornebise, Kavukcuoglu, and Wierstra}]{blundell2015weight}
Blundell, C.; Cornebise, J.; Kavukcuoglu, K.; and Wierstra, D. 2015.
\newblock Weight uncertainty in neural network.
\newblock In \emph{International conference on machine learning}, 1613--1622. PMLR.

\bibitem[{Bushra and Maheswari(2021)}]{bushra2021crime}
Bushra, S.~N.; and Maheswari, K.~U. 2021.
\newblock Crime Investigation using DCGAN by Forensic Sketch-to-Face Transformation (STF)-A Review.
\newblock In \emph{2021 5th International Conference on Computing Methodologies and Communication (ICCMC)}, 1343--1348. IEEE.

\bibitem[{Bushra and Shobana(2020)}]{bushra2020survey}
Bushra, S.~N.; and Shobana, G. 2020.
\newblock A Survey on Deep Convolutional Generative Adversarial Neural Network (DCGAN) for Detection of Covid-19 using Chest X-ray/CT-Scan.
\newblock In \emph{2020 3rd International Conference on Intelligent Sustainable Systems (ICISS)}, 702--708. IEEE.

\bibitem[{Cao et~al.(2018)Cao, Jia, Chen, Lin, Yang, Zhang, Liu, Li, and Dai}]{cao2018recent}
Cao, Y.-J.; Jia, L.-L.; Chen, Y.-X.; Lin, N.; Yang, C.; Zhang, B.; Liu, Z.; Li, X.-X.; and Dai, H.-H. 2018.
\newblock Recent advances of generative adversarial networks in computer vision.
\newblock \emph{IEEE Access}, 7: 14985--15006.

\bibitem[{Chien and Kuo(2019)}]{chien2019variational}
Chien, J.-T.; and Kuo, C.-L. 2019.
\newblock Variational bayesian gan.
\newblock In \emph{2019 27th European Signal Processing Conference (EUSIPCO)}, 1--5. IEEE.

\bibitem[{Chuquicusma et~al.(2018)Chuquicusma, Hussein, Burt, and Bagci}]{chuquicusma2018fool}
Chuquicusma, M.~J.; Hussein, S.; Burt, J.; and Bagci, U. 2018.
\newblock How to fool radiologists with generative adversarial networks? A visual turing test for lung cancer diagnosis.
\newblock In \emph{2018 IEEE 15th international symposium on biomedical imaging (ISBI 2018)}, 240--244. IEEE.

\bibitem[{Dhariwal and Nichol(2021)}]{dhariwal2021diffusion}
Dhariwal, P.; and Nichol, A. 2021.
\newblock Diffusion models beat gans on image synthesis.
\newblock \emph{Advances in neural information processing systems}, 34: 8780--8794.

\bibitem[{Durgadevi et~al.(2021)}]{durgadevi2021generative}
Durgadevi, M.; et~al. 2021.
\newblock Generative adversarial network (gan): a general review on different variants of gan and applications.
\newblock In \emph{2021 6th International Conference on Communication and Electronics Systems (ICCES)}, 1--8. IEEE.

\bibitem[{Esser, Rombach, and Ommer(2021)}]{esser2021taming}
Esser, P.; Rombach, R.; and Ommer, B. 2021.
\newblock Taming transformers for high-resolution image synthesis.
\newblock In \emph{Proceedings of the IEEE/CVF conference on computer vision and pattern recognition}, 12873--12883.

\bibitem[{Farajzadeh-Zanjani et~al.(2022)Farajzadeh-Zanjani, Razavi-Far, Saif, and Palade}]{farajzadeh2022generative}
Farajzadeh-Zanjani, M.; Razavi-Far, R.; Saif, M.; and Palade, V. 2022.
\newblock Generative adversarial networks: a survey on training, variants, and applications.
\newblock In \emph{Generative Adversarial Learning: Architectures and Applications}, 7--29. Springer.

\bibitem[{Frid-Adar et~al.(2018{\natexlab{a}})Frid-Adar, Diamant, Klang, Amitai, Goldberger, and Greenspan}]{frid2018gan}
Frid-Adar, M.; Diamant, I.; Klang, E.; Amitai, M.; Goldberger, J.; and Greenspan, H. 2018{\natexlab{a}}.
\newblock GAN-based synthetic medical image augmentation for increased CNN performance in liver lesion classification.
\newblock \emph{Neurocomputing}, 321: 321--331.

\bibitem[{Frid-Adar et~al.(2018{\natexlab{b}})Frid-Adar, Klang, Amitai, Goldberger, and Greenspan}]{frid2018synthetic}
Frid-Adar, M.; Klang, E.; Amitai, M.; Goldberger, J.; and Greenspan, H. 2018{\natexlab{b}}.
\newblock Synthetic data augmentation using GAN for improved liver lesion classification.
\newblock In \emph{2018 IEEE 15th international symposium on biomedical imaging (ISBI 2018)}, 289--293. IEEE.

\bibitem[{Goan and Fookes(2020)}]{goan2020bayesian}
Goan, E.; and Fookes, C. 2020.
\newblock Bayesian neural networks: An introduction and survey.
\newblock \emph{Case Studies in Applied Bayesian Data Science: CIRM Jean-Morlet Chair, Fall 2018}, 45--87.

\bibitem[{Goodfellow et~al.(2014)Goodfellow, Pouget-Abadie, Mirza, Xu, Warde-Farley, Ozair, Courville, and Bengio}]{goodfellow2014generative}
Goodfellow, I.; Pouget-Abadie, J.; Mirza, M.; Xu, B.; Warde-Farley, D.; Ozair, S.; Courville, A.; and Bengio, Y. 2014.
\newblock Generative adversarial nets.
\newblock \emph{Advances in neural information processing systems}, 27.

\bibitem[{Gulrajani et~al.(2017)Gulrajani, Ahmed, Arjovsky, Dumoulin, and Courville}]{gulrajani2017improved}
Gulrajani, I.; Ahmed, F.; Arjovsky, M.; Dumoulin, V.; and Courville, A.~C. 2017.
\newblock Improved training of wasserstein gans.
\newblock \emph{Advances in neural information processing systems}, 30.

\bibitem[{Ho, Jain, and Abbeel(2020)}]{ho2020denoising}
Ho, J.; Jain, A.; and Abbeel, P. 2020.
\newblock Denoising diffusion probabilistic models.
\newblock \emph{Advances in neural information processing systems}, 33: 6840--6851.

\bibitem[{Jabbar, Li, and Omar(2021)}]{jabbar2021survey}
Jabbar, A.; Li, X.; and Omar, B. 2021.
\newblock A survey on generative adversarial networks: Variants, applications, and training.
\newblock \emph{ACM Computing Surveys (CSUR)}, 54(8): 1--49.

\bibitem[{Joyce(2011)}]{joyce2011kullback}
Joyce, J.~M. 2011.
\newblock Kullback-leibler divergence.
\newblock In \emph{International encyclopedia of statistical science}, 720--722. Springer.

\bibitem[{Kazeminia et~al.(2020)Kazeminia, Baur, Kuijper, van Ginneken, Navab, Albarqouni, and Mukhopadhyay}]{kazeminia2020gans}
Kazeminia, S.; Baur, C.; Kuijper, A.; van Ginneken, B.; Navab, N.; Albarqouni, S.; and Mukhopadhyay, A. 2020.
\newblock GANs for medical image analysis.
\newblock \emph{Artificial Intelligence in Medicine}, 109: 101938.

\bibitem[{Kingma, Welling et~al.(2019)}]{kingma2019introduction}
Kingma, D.~P.; Welling, M.; et~al. 2019.
\newblock An introduction to variational autoencoders.
\newblock \emph{Foundations and Trends{\textregistered} in Machine Learning}, 12(4): 307--392.

\bibitem[{Kitchen and Seah(2017)}]{kitchen2017deep}
Kitchen, A.; and Seah, J. 2017.
\newblock Deep generative adversarial neural networks for realistic prostate lesion MRI synthesis.
\newblock \emph{arXiv preprint arXiv:1708.00129}.

\bibitem[{Kullback and Leibler(1951)}]{kullback1951information}
Kullback, S.; and Leibler, R.~A. 1951.
\newblock On information and sufficiency.
\newblock \emph{The annals of mathematical statistics}, 22(1): 79--86.

\bibitem[{Lee et~al.(2018)Lee, Tseng, Huang, Singh, and Yang}]{lee2018diverse}
Lee, H.-Y.; Tseng, H.-Y.; Huang, J.-B.; Singh, M.; and Yang, M.-H. 2018.
\newblock Diverse image-to-image translation via disentangled representations.
\newblock In \emph{Proceedings of the European conference on computer vision (ECCV)}, 35--51.

\bibitem[{Lee and Seok(2020)}]{lee2020regularization}
Lee, M.; and Seok, J. 2020.
\newblock Regularization methods for generative adversarial networks: An overview of recent studies.
\newblock \emph{arXiv preprint arXiv:2005.09165}.

\bibitem[{Mao et~al.(2017)Mao, Li, Xie, Lau, Wang, and Paul~Smolley}]{mao2017least}
Mao, X.; Li, Q.; Xie, H.; Lau, R.~Y.; Wang, Z.; and Paul~Smolley, S. 2017.
\newblock Least squares generative adversarial networks.
\newblock In \emph{Proceedings of the IEEE international conference on computer vision}, 2794--2802.

\bibitem[{Mathieu et~al.(2016)Mathieu, Zhao, Zhao, Ramesh, Sprechmann, and LeCun}]{mathieu2016disentangling}
Mathieu, M.~F.; Zhao, J.~J.; Zhao, J.; Ramesh, A.; Sprechmann, P.; and LeCun, Y. 2016.
\newblock Disentangling factors of variation in deep representation using adversarial training.
\newblock \emph{Advances in neural information processing systems}, 29.

\bibitem[{Mohammed and Kora(2023)}]{mohammed2023comprehensive}
Mohammed, A.; and Kora, R. 2023.
\newblock A comprehensive review on ensemble deep learning: Opportunities and challenges.
\newblock \emph{Journal of King Saud University-Computer and Information Sciences}, 35(2): 757--774.

\bibitem[{Motwani and Parmar(2020)}]{motwani2020novel}
Motwani, T.; and Parmar, M. 2020.
\newblock A novel framework for selection of GANs for an application.
\newblock \emph{arXiv preprint arXiv:2002.08641}.

\bibitem[{Nowozin, Cseke, and Tomioka(2016)}]{nowozin2016f}
Nowozin, S.; Cseke, B.; and Tomioka, R. 2016.
\newblock f-gan: Training generative neural samplers using variational divergence minimization.
\newblock \emph{Advances in neural information processing systems}, 29.

\bibitem[{Radford, Metz, and Chintala(2015)}]{radford2015unsupervised}
Radford, A.; Metz, L.; and Chintala, S. 2015.
\newblock Unsupervised representation learning with deep convolutional generative adversarial networks.
\newblock \emph{arXiv preprint arXiv:1511.06434}.

\bibitem[{Saatci and Wilson(2017)}]{saatci2017bayesian}
Saatci, Y.; and Wilson, A.~G. 2017.
\newblock Bayesian gan.
\newblock \emph{Advances in neural information processing systems}, 30.

\bibitem[{Saxena and Cao(2021)}]{saxena2021generative}
Saxena, D.; and Cao, J. 2021.
\newblock Generative adversarial networks (GANs) challenges, solutions, and future directions.
\newblock \emph{ACM Computing Surveys (CSUR)}, 54(3): 1--42.

\bibitem[{Schlegl et~al.(2017)Schlegl, Seeb{\"o}ck, Waldstein, Schmidt-Erfurth, and Langs}]{schlegl2017unsupervised}
Schlegl, T.; Seeb{\"o}ck, P.; Waldstein, S.~M.; Schmidt-Erfurth, U.; and Langs, G. 2017.
\newblock Unsupervised anomaly detection with generative adversarial networks to guide marker discovery.
\newblock In \emph{International conference on information processing in medical imaging}, 146--157. Springer.

\bibitem[{Szegedy et~al.(2013)Szegedy, Zaremba, Sutskever, Bruna, Erhan, Goodfellow, and Fergus}]{szegedy2013intriguing}
Szegedy, C.; Zaremba, W.; Sutskever, I.; Bruna, J.; Erhan, D.; Goodfellow, I.; and Fergus, R. 2013.
\newblock Intriguing properties of neural networks.
\newblock \emph{arXiv preprint arXiv:1312.6199}.

\bibitem[{Thanh-Tung and Tran(2020)}]{thanh2020catastrophic}
Thanh-Tung, H.; and Tran, T. 2020.
\newblock Catastrophic forgetting and mode collapse in GANs.
\newblock In \emph{2020 international joint conference on neural networks (ijcnn)}, 1--10. IEEE.

\bibitem[{Uhlig(1996)}]{uhlig1996law}
Uhlig, H. 1996.
\newblock A law of large numbers for large economies.
\newblock \emph{Economic Theory}, 8: 41--50.

\bibitem[{Ulhaq and Akhtar(2022)}]{ulhaq2022efficient}
Ulhaq, A.; and Akhtar, N. 2022.
\newblock Efficient diffusion models for vision: A survey.
\newblock \emph{arXiv preprint arXiv:2210.09292}.

\bibitem[{Wiatrak, Albrecht, and Nystrom(2019)}]{wiatrak2019stabilizing}
Wiatrak, M.; Albrecht, S.~V.; and Nystrom, A. 2019.
\newblock Stabilizing generative adversarial networks: A survey.
\newblock \emph{arXiv preprint arXiv:1910.00927}.

\bibitem[{Yang et~al.(2023)Yang, Zhang, Song, Hong, Xu, Zhao, Zhang, Cui, and Yang}]{yang2023diffusion}
Yang, L.; Zhang, Z.; Song, Y.; Hong, S.; Xu, R.; Zhao, Y.; Zhang, W.; Cui, B.; and Yang, M.-H. 2023.
\newblock Diffusion models: A comprehensive survey of methods and applications.
\newblock \emph{ACM computing surveys}, 56(4): 1--39.

\end{thebibliography}

\appendix
\section{Appendix}

\subsection{Bayesian Neural Network}

Deep learning has gained significant attention across various fields but suffers from overfitting \cite{szegedy2013intriguing} and overconfident estimates \cite{goan2020bayesian}, necessitating careful management.
A Bayesian Neural Network (BNN) deviates from the conventional deep neural network paradigm, offering a distinctive solution to the challenges posed by overfitting in deep learning models. In contrast to conventional neural networks that provide point estimates for weights, a BNN incorporates uncertainty into its predictions by representing the weights and biases as probability distributions instead of fixed values (Figure~\ref{fig:bnn}). This distinctive approach enables the incorporation of prior knowledge regarding these parameters into the model, allowing for the continuous refinement of beliefs as new data becomes available. By embracing probabilistic representations, Bayesian Neural Networks provide a flexible and adaptive framework, particularly useful for scenarios where uncertainty estimation is crucial. This design allows the model to not only make predictions but also express the range of potential outcomes, offering a more comprehensive understanding of the data and facilitating informed decision-making.

\begin{figure}[h]
    \centering
    \includegraphics[scale=0.40]{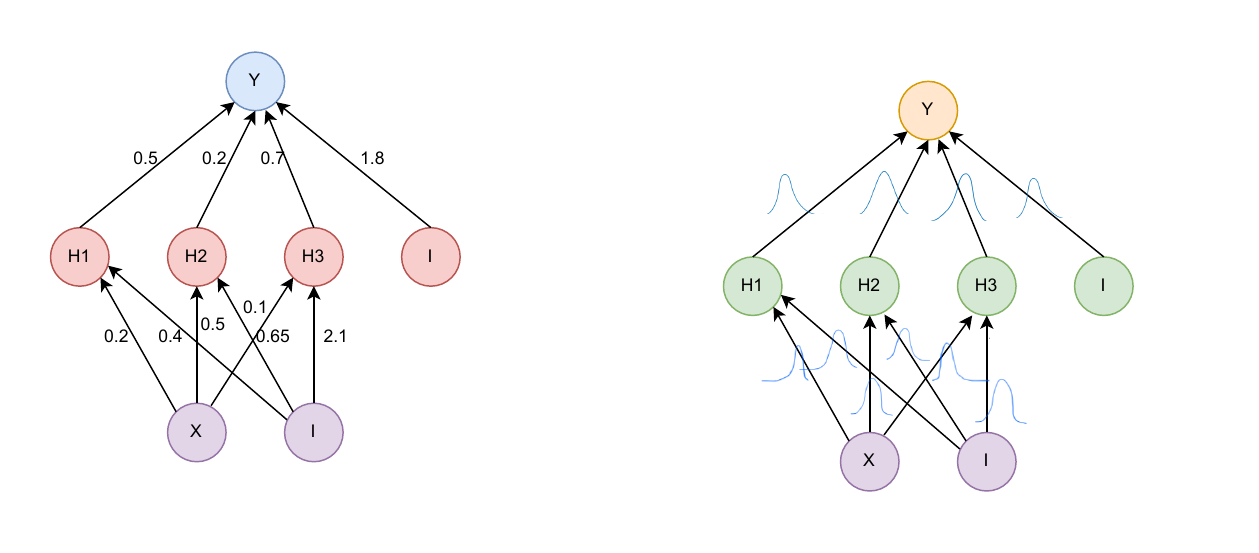}
    %\vspace{11pt}
    \caption{Bayesian Neural Networks (right) vs. conventional Neural Networks.}
    %\vspace{11pt}
    \label{fig:bnn}
\end{figure}

BNNs represent a unique class of stochastic neural networks that integrate Bayesian inference principles into their architecture. In the context of BNNs, network parameters, encompassing weights and biases, are treated as random variables, and a probability distribution is systematically defined over them \cite{goan2020bayesian}. The distinguishing feature of BNNs lies in their capacity to learn and adapt to the distribution of weights. This learning process empowers BNNs to provide not just point estimates but a comprehensive measure of uncertainty regarding their predictions. This aspect of quantifiable uncertainty proves to be highly advantageous across a diverse array of applications.

Bayes' Theorem offers a powerful tool for representing a distribution over parameters conditioned on the observed data, denoted as the posterior probability distribution $P(W|D)$. The theorem is expressed through the equation:
{\small
\begin{equation}
    \label{eq}
    \centering
    P(W|D) = \frac{P(D|W) P(W)}{P(D)} = \frac{P(D|W) P(W)}{ \int_{W^\prime}  P(D|W^\prime ) P(W^\prime ) \,dW^\prime }
\end{equation}}

In this context, $P(W)$ represents the prior probability distribution, $P(D|W)$ denotes the data likelihood, and $P(D)$ is the marginal probability (also refers as the normalization constant). This method of computing the posterior $P(W|D)$ is commonly referred to as exact inference. However, as indicated in the equation~\ref{eq}, computing $P(D)$ requires an integral over the weight space, which is often intractable. To overcome this challenge, two primary approaches have been proposed: Markov Chain Monte Carlo (MCMC) and Variational Inference. While Saatci \textit{et al.} \cite{saatci2017bayesian} utilizes Hamiltonian Monte Carlo (an MCMC method), Blundell \textit{et al.} \cite{blundell2015weight} employs the variational inference method to estimate the posterior distribution, a technique also employed in the current study.

\subsection{Mean Field Variation Inference Method for BNN}
\label{mfvibnn}
Variational inference is a method that seeks to approximate the posterior distribution $P(W|D)$ by introducing a surrogate distribution denoted as $q$. This surrogate, known as the variational distribution, is selected from a tractable family of distributions. The objective is to find the distribution $q$ that is the ``closest'' to the true posterior, thereby facilitating more manageable computations. In MFVI, the algorithm operates under the assumption that the variational family is fully factorized. This means that the joint posterior distribution, denoted as \(p(w | D)\), can be approximated by the product of individual variational distributions for each latent random variable. In other words, the approximation is expressed as:

\begin{equation}
    \label{mfvi}
    \centering
    p(w | D) \approx \prod_{i=1}^{n} q_i(w_i)
\end{equation}

As stated previously, variational inference aims to identify the distribution \(q(w)\) that best approximates the true posterior. The proximity between the two distributions is quantified using the Kullback-Leibler (KL) divergence \cite{kullback1951information, joyce2011kullback}, denoted as \(KL(q||p)\), defined as:

\begin{align}
    \label{kl}
    \text{KL}(q(w)||p(w|D)) &= \int_{w} q(w) \log\frac{q(w)}{p(w|D)} \,dw \\
    &= \mathbb{E}\left[\log\frac{q(w)}{p(w|D)}\right]
\end{align}

where \(q(w)\) is the variational distribution, and \(p(w|D)\) is the true posterior distribution given observed data \(x\). Also referred to as information gain, KL divergence measures the "information lost when \(p(w|D)\) is approximated by \(q(w)\)". If the true posterior distribution \(p(w|D)\) and the variational distribution \(q(w)\) are identical, the KL divergence \(KL(q(w)||p(w|D)))\) equals zero, indicating that no information is lost in the approximation. On the contrary, as \(p\) and \(q\) diverge, the value of \(KL(q(w)||p(w|D))\) increases, signifying the growing difficulty in predicting the true distribution \(p(w|D)\) based on the approximation \(q(w)\). Minimizing the KL divergence corresponds to making the variational distribution \(q(w)\) as close as possible to the true posterior \(p(w|D)\). By expanding equation~\ref{kl}, we have:
{\small
\begin{equation}
    \label{elbo}
    \begin{split}
        \text{KL}(q(w)||p(w|D)) &= \int_{w} q(w) \log\frac{q(w)}{p(w|D)} \,dw \\
        &= \int_{w} q(w) \log\frac{q(w)p(w)}{p(w,D)} \,dw\\
        &= \int_{w} q(w) \log\frac{q(w)}{p(w,D)} \,dw \\
        &\quad+ \int_{w} q(w) \log p(D) \,dw \\
        &= \int_{w} q(w) \log\frac{q(w)}{p(D|w)p(w)} \,dw\\  &\quad+ \log p(D)\\
        &= \int_{w} q(w) \log\frac{q(w)}{p(w)} \,dw \\
        &\quad- \int_{w} q(w) \log p(D|w) \,dw+ \log p(D)
    \end{split}
\end{equation}}

In the above equation, $\int_{w} q(w) \log\frac{q(w)}{p(w)} \,dw  - \int_{w} q(w) \log p(D|w) \,dw$ is negative of evidence lower bound (ELBO) \cite{kingma2019introduction}, so that:

{\small
\begin{equation}
    \label{elbo2}
    \begin{split}
        ELBO &= \int_{w} q(w) \log p(D|w) \,dw - \int_{w} q(w) \log\frac{q(w)}{p(w)} \,dw 
    \end{split}
\end{equation}}

Considering equation~\ref{kl}, ELBO can be rewritten as follows:

\begin{equation}
    \label{elbo3}
    ELBO = \mathbb{E}_{w \sim q(w)}\left[\log p(D|w)\right] - \text{KL}(q(w)||p(w))
\end{equation}

Therefore, minimizing \(KL(q(w)||p(w|D))\) is equivalent to minimizing $-ELBO$, or maximizing $ELBO.$

\subsection{BI-DCGAN Architecture}%Bayesian DCGAN Architecture}
\label{BayesDCGANs}

The Generative Adversarial Network utilized in the current study is specifically the DCGAN. DCGAN is a variant of GAN that exhibits a distinctive architecture where both the discriminator and generator incorporate convolutional layers. The use of convolutional layers is particularly advantageous as it allows the network to effectively capture spatial dependencies within the data. This spatial awareness significantly contributes to enhancing the overall quality of the generated images.
In the endeavor to incorporate Bayesian Neural Networks (BNNs) into the framework of DCGAN, the methodology presented by Blundel \textit{et al.} \cite{blundell2015weight} was followed. This approach involves treating the weights within the neural network as random variables and introducing a distribution over these weights. The motivation behind adopting this Bayesian perspective is to enable the model to account for uncertainty in its predictions, contributing to a more robust and versatile generative model. 

A main aspect highlighted in Blundell \textit{et al.} \cite{blundell2015weight}  involves the utilization of the reparameterization technique. This technique plays a crucial role in ensuring that the variational parameters are sampled from a specific distribution, introducing a layer of stochasticity to the model. The assumption made in the study is that the variational posterior follows a diagonal Gaussian distribution. The process initiates by generating a sample from a unit Gaussian distribution, labeled as $\epsilon$. This sample undergoes a deterministic transformation, involving a shift by a mean $\mu$ and a scaling by a standard deviation $\sigma$, as specified in Equation~\ref{eq_w}. Consequently, this process yields the weight parameters $W$.

\begin{equation}
    \centering
    \label{eq_w}
    W = \mu + \log(1+exp(\rho))\odot \epsilon
\end{equation}

Here, $\mu$ represents the mean of the distribution, and $\rho$ is a parameter used to determine the standard deviation through the transformation $\sigma = \log(1 + \exp(\rho))$. Taking into account the shift and scaling operations, the weight parameters $W$ follow a normal distribution $\mathcal{N}(\mu, \sigma^2)$, where the set of learnable variational posterior parameters is represented as $\theta = (\mu, \rho)$. Thus, the variational posterior can be denoted as $q(w|\theta)$.

As previously mentioned, we adopt (similar to Blundell \textit{et al.} \cite{blundell2015weight}) a Gaussian variational posterior. However, in terms of the prior, a scale mixture of two Gaussian densities with zero mean and distinct variances, where \( \sigma_{1} > \sigma_{2} \) and $\sigma_2 \ll 1$, is considered. The prior distribution is expressed as follows:

\begin{equation}
    \label{priors}
    \centering
    P(w)=\prod_{j} { \pi \mathcal{N}(w_{j}\mid 0,\,\sigma^2_{1})+(1-\pi)\mathcal{N}(w_{j}\mid0,\,\sigma^2_{2}) }
\end{equation}

Here, \( w_{j} \) represents the jth component of the weight vector \( W \), and \( \pi \) signifies the mixture weight, controlling the influence of each Gaussian component in the prior. 

The objective function of Bayesian Neural Networks is to minimize KL divergence, according to Blundell \textit{et al.} \cite{blundell2015weight} and the Equation~\ref{elbo3}, which can be rewritten as follows:

\begin{equation}
    \begin{split}
    \label{objs}
    \text{KL}(q(w|\theta)||p(w|D)) &= -ELBO \\
     &= \mathbb{E}_{w \sim q(w|\theta)}\left[\log q(w|\theta) \right]\\
     & \quad - \mathbb{E}_{w \sim q(w|\theta)}\left[\log p(w) \right]\\
     & \quad -\mathbb{E}_{w \sim q(w|\theta)}\left[\log p(D|w)\right]
    \end{split}
\end{equation}

According to the Law of Large Numbers (LLN) \cite{uhlig1996law}, a fundamental statistical principle,   \( \lim_{{n \to \infty}} \frac{1}{n} \sum_{{i=1}}^n x_i = E(x) \) with a probability of 1. This law ensures that as the size of a sample increases, the sample mean converges towards the expected mean of the entire population. Utilizing LLN and drawing a sufficient number of Monte Carlo samples from the variational posterior \( q(w|\theta) \), we can rewrite Equation~\ref{objs} as follows:

{\small
\begin{equation}
    \label{objms}
    \begin{split}
    \text{KL}(q(w|\theta) || p(w|D)) &\approx \frac{1}{n} \sum_{i=1}^n \big( \log q(w^{(i)}|\theta) \\
    &\quad - \log p(w^{(i)}) - \log p(D|w^{(i)}) \Big)
    \end{split}
\end{equation}}

where \( w^{(i)} \) represents the ith Monte Carlo sample drawn from the variational posterior \( q(w|\theta) \).

In the present study, we build upon the foundational concepts of Bayesian neural networks introduced by Blundell \textit{et al.} \cite{blundell2015weight} by integrating them into a Convolutional 2-dimensional (Conv2D) and Convolutional 2-dimensional transpose (Conv2D Transpose) architecture to develop our BI-DCGAN model. This innovative approach allows us to leverage the strengths of convolutional layers in processing image data while incorporating the Bayesian framework to enhance diversity of the generated samples. 

Therefore, the formulation of loss function for our model involves the consideration of two key loss functions outlined in Equations~\ref{ganloss}~and~\ref{objms}. These loss functions encapsulate the essential components for training the BI-DCGAN model. The discriminator loss, as expressed in Equation~\ref{disloss1}, is calculated by evaluating the difference between two main components. The first component entails a probabilistic comparison between the variational posterior and prior of the discriminator network. The second component is related to the adversarial nature of GANs and consists of the binary cross-entropy between the log probability of the real data \(D(x)\) and the log probability of the generated data \(D(G(z))\), Equation~\ref{disloss1}.

Likewise, the generator loss, as described in Equation~\ref{genloss1}, is determined by assessing the probabilistic distinction between the variational posterior and the prior distribution of the generator network. The second component involves the binary cross-entropy between the log probability of the generated data \(D(G(z))\).

{\small
\begin{equation}
    \begin{split}
    \label{disloss1}
        \text{Discriminator\_loss} &= \frac{1}{n} \sum_{i=1}^n \left( \log q(w_d^{(i)}|\theta_d) - \log p(w_d^{(i)}) \right) \\
        &\quad - \big( \log(D(x,w_d))\\
        & \quad + \log(1 - D(G(z,w_g),w_d)) \big)
    \end{split}
\end{equation}

\begin{equation}
    \begin{split}
    \label{genloss1}
        \text{Generator\_loss} &= \frac{1}{n} \sum_{i=1}^n \left( \log q(w_g^{(i)}|\theta_g) - \log p(w_g^{(i)}) \right) \\
        &\quad -  \log(D(G(z,w_g),w_d)) 
    \end{split}
\end{equation}}

The "D" and "G" in equations~\ref{disloss1}~and~\ref{genloss1} represent the discriminator and the generator network, respectively.

\begin{figure*}[htbp]
\centering
\begin{subfigure}{0.45\textwidth}
  \centering
  \includegraphics[width=\linewidth]{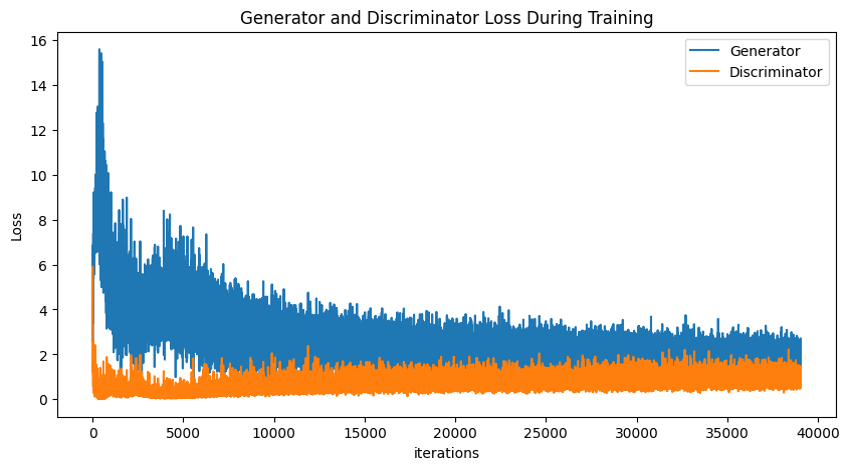}
  %\vspace{11pt}
  \caption{BI-DCGAN loss-MNIST.}
  %\vspace{11pt}
  \label{fig:bayesdcgan}
%\hspace{0.02\textwidth}
\end{subfigure}
\begin{subfigure}{0.45\textwidth}
  \centering
  \includegraphics[width=\linewidth]{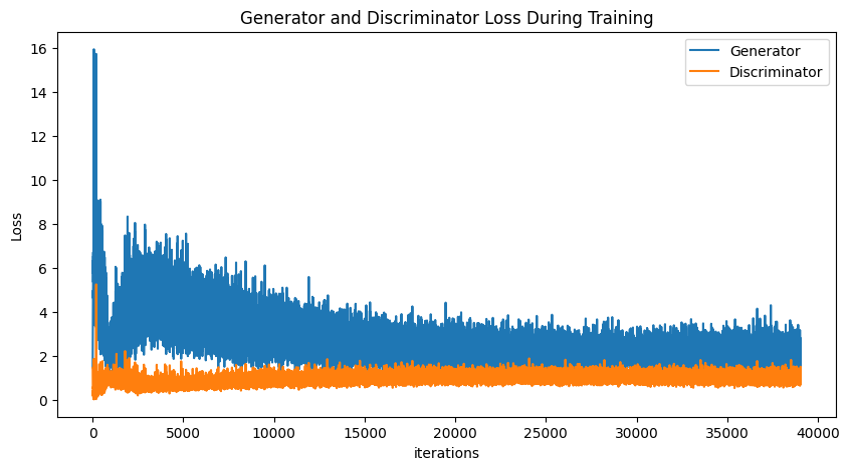}
  %\vspace{11pt}
  \caption{Conventional DCGAN loss-MNIST.}
  %conventional DCGAN architecture loss-MNIST
  %\vspace{11pt}
  \label{fig:tdcgan}
\end{subfigure}
%\vspace{11pt}
%\hspace{0.18\textwidth}

\begin{subfigure}{0.45\textwidth}
  \centering
  \includegraphics[width=\linewidth]{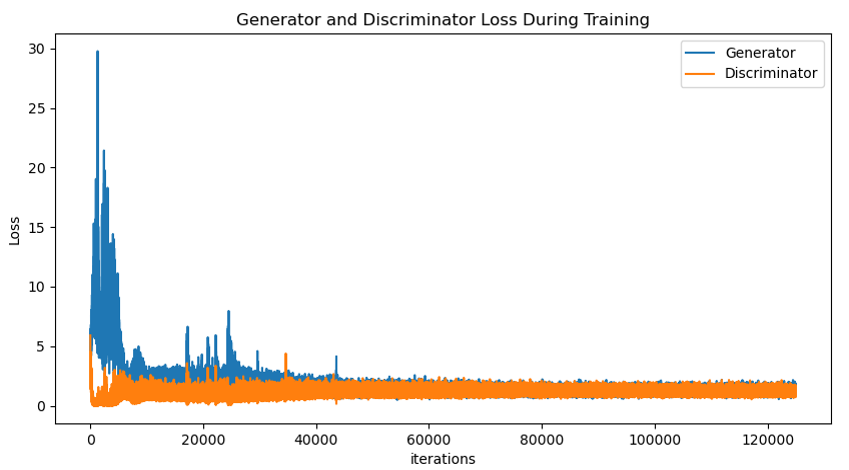}
  %\vspace{11pt}
  \caption{BI-DCGAN loss-CIFAR10.}
  %\vspace{11pt}
  \label{fig:bdcganc}
\end{subfigure}
\begin{subfigure}{0.45\textwidth}
  \centering
  \includegraphics[width=\linewidth]{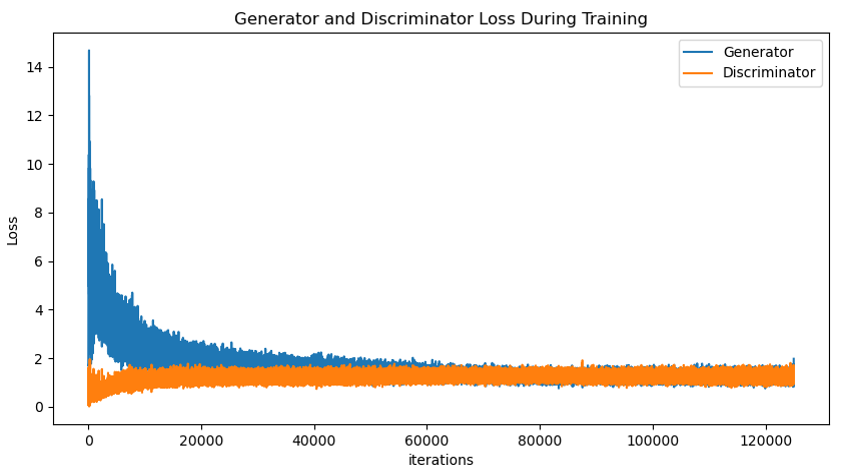}
  %\vspace{11pt}
  \caption{Conventional DCGAN loss-CIFAR10.}
  %\vspace{11pt}
  \label{fig:tdcganc}
\end{subfigure}
\begin{subfigure}{0.45\textwidth}
  \centering
  \includegraphics[width=\linewidth]{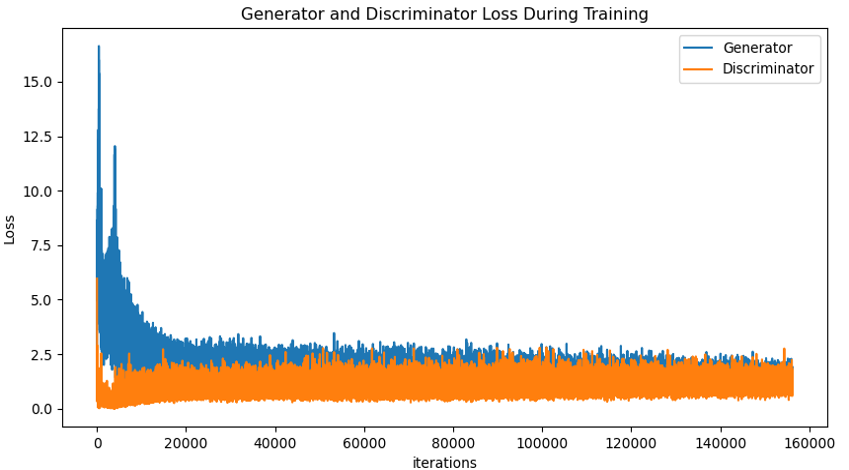}
  %\vspace{11pt}
  \caption{BI-DCGAN loss-FashionMNIST.}
  %\vspace{11pt}
  \label{fig:bdcganf}
\end{subfigure}
\begin{subfigure}{0.45\textwidth}
  \centering
  \includegraphics[width=\linewidth]{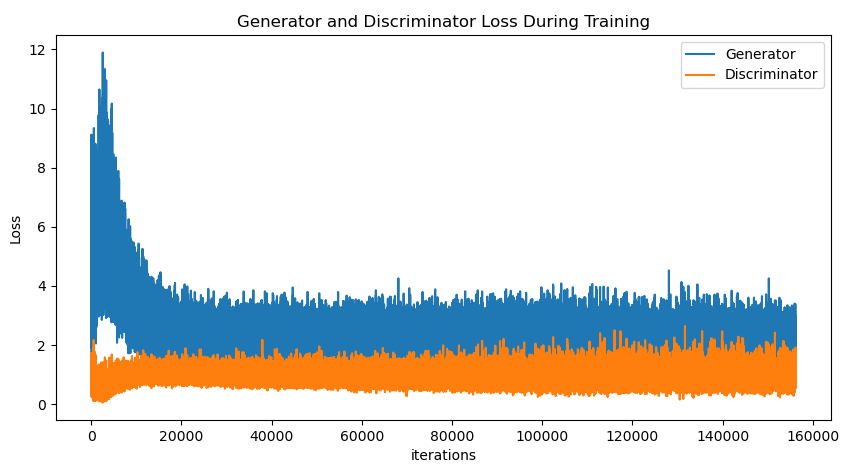}
  %\vspace{11pt}
  \caption{Conventional DCGAN loss-FashionMNIST.}
  %\vspace{11pt}
  \label{fig:tdcganf}
\end{subfigure}

\begin{subfigure}{0.45\textwidth}
  \centering
  \includegraphics[width=\linewidth]{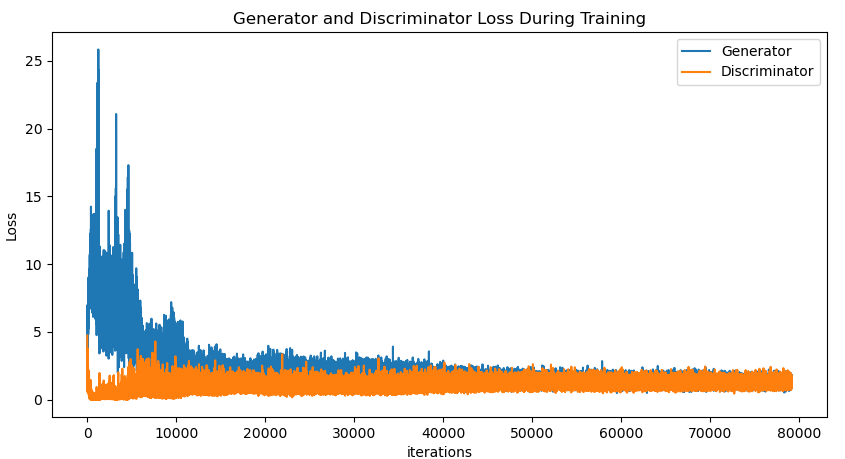}
  %\vspace{11pt}
  \caption{BI-DCGAN 
  loss-SVHN.}
  %\vspace{11pt}
  \label{fig:bdcgans}
\end{subfigure}
\begin{subfigure}{0.45\textwidth}
  \centering
  \includegraphics[width=\linewidth]{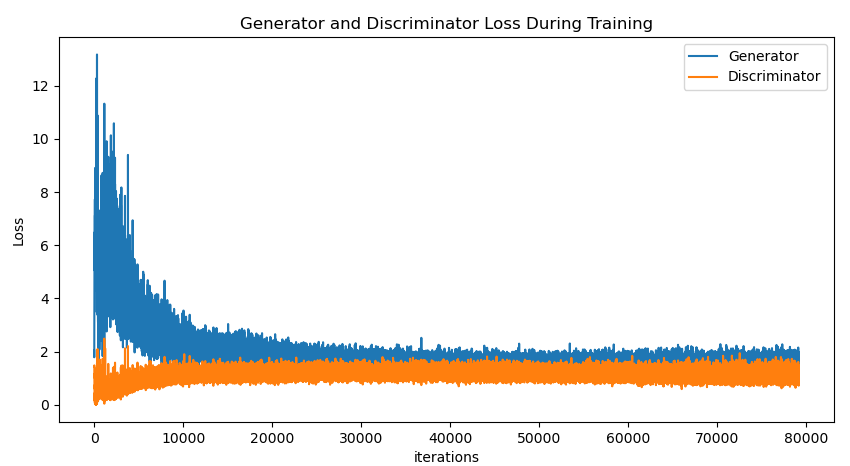}
  %\vspace{11pt}
  \caption{Conventional DCGAN loss-SVHN.}
  %\vspace{11pt}
  \label{fig:tdcgans}
\end{subfigure}

\caption{The generator and discriminator loss during training for BI-DCGAN and conventional DCGAN.}
\vspace{11pt}
\label{fig:losses}
\end{figure*}

\subsection{Dataset and Preprocessing}

The MNIST dataset, a benchmark in the field of machine learning, comprises handwritten digits ranging from $0$ to $9$. Widely employed for diverse learning tasks, it is specifically crafted for image classification, serving as a foundational dataset for the development and enhancement of prediction models, especially convolutional deep neural networks. This dataset includes $60,000$ training images and $10,000$ test images, with each digit being centrally positioned in a grayscale image of dimensions $28 \times 28$. In addition to MNIST, we incorporated the CIFAR-10, Fashion-MNIST, and SVHN datasets to evaluate our approach on more diverse and challenging data distributions. The CIFAR-10 dataset contains \(60,000\) \(32 \times 32\) color images across \(10\) classes, with \(50,000\) designated for training and \(10,000\) for testing. The Fashion-MNIST dataset includes \(60,000\) training images and \(10,000\) test images of \(28 \times 28\) grayscale clothing items across \(10\) categories. The SVHN (Street View House Numbers) dataset is a real-world image dataset containing over \(600,000\) \(32 \times 32\) color images of house numbers, with \(73,257\) images in the training set, \(26,032\) in the test set, and \(531,131\) additional images for extra training. For our experiments, all images from the four datasets were converted to grayscale, resized to \(64 \times 64\) pixels, and Gaussian noise was added to the training datasets.

\subsection{Comparative Analysis of Loss Dynamics}

To assess the performance of our proposed BI-DCGAN model, we conducted a comprehensive comparison with the standard DCGAN architecture across four datasets: MNIST, FashionMNIST, CIFAR-10, and SVHN. Figure~\ref{fig:bayesdcgan}~\&~\ref{fig:tdcgan} depict the generator and discriminator loss across different iterations for the MNIST dataset, providing insights into the training process, convergence patterns, and the alignment of our BI-DCGAN model with the conventional DCGAN. Similarly, Figures~\ref{fig:bdcganc}~\&~\ref{fig:tdcganc} present the corresponding results for CIFAR-10, Figures~\ref{fig:bdcganf}~\&~\ref{fig:tdcganf} demonstrate the results for FashionMNIST, and Figures~\ref{fig:bdcgans}~\&~\ref{fig:tdcgans} show the results for SVHN. These comparisons further validate the robustness of our BI-DCGAN approach across datasets of varying complexity.

As it was mentioned, the generator loss measures how well the generator is performing in generating realistic samples, while the discriminator loss assesses the ability of the discriminator to differentiate between real and generated samples. In the initial stages, both losses fluctuate as the model undergoes learning. Subsequently, the generator loss demonstrates a downward trend, reflecting an improvement in the generator's ability to produce realistic samples. The BI-DCGAN exhibits reduced fluctuations in the generator loss throughout the training process compared to the conventional DCGAN. The convergence of both losses indicates that the BI-DCGAN model has reached a certain optimum, suggesting that further improvement is limited. This convergence also signifies that the model has acquired a sufficient level of learning and proficiency in generating and discriminating samples.

\end{document}